\newif\ifNeurIPS\NeurIPSfalse
\newtheorem{theorem}{Theorem}[section]
\newtheorem*{theorem*}{Theorem}
\newtheorem{proposition}[theorem]{Proposition}
\newtheorem*{proposition*}{Proposition}
\newtheorem{lemma}[theorem]{Lemma}
\newtheorem*{lemma*}{Lemma}
\newtheorem*{conjecture*}{Conjecture}
\newtheorem{fact}[theorem]{Fact}
\newtheorem*{fact*}{Fact}
\newtheorem*{hypothesis*}{Hypothesis}
\newtheorem*{claim*}{Claim}
\theoremstyle{definition}
\theoremstyle{remark}
\newtheorem*{remark*}{Remark}
\newtheorem*{observation*}{Observation}
\newcommand{\R}{\mathbb{R}}
\newcommand{\calX}{\mathcal{X}}
\newcommand{\calU}{\mathcal{U}}
\newcommand{\calY}{\mathcal{Y}}
\newcommand{\bfone}{\mathbf{1}}
\newcommand{\norm}[1]{\lVert #1 \rVert}
\newcommand{\Bignorm}[1]{\Big\lVert#1\Big\rVert}
\newcommand{\iprod}[1]{\langle#1\rangle}
\newcommand{\Esymb}{\mathbb{E}}
\newcommand{\Psymb}{\mathbb{P}}
\DeclareMathOperator*{\E}{\Esymb}
\DeclareMathOperator*{\ProbOp}{\Psymb}
\renewcommand{\Pr}{\ProbOp}
\newcommand{\tr}{\text{tr}}
\newcommand{\eps}{\varepsilon}
\renewcommand{\epsilon}{\varepsilon}
\newcommand{\sdpval}{\text{SDP}_{val}}
\newcommand{\diag}{\textrm{diag}}
\newcommand{\cub}{\text{val}_{\text{curr}}}
\newcommand{\miny}{y_{\text{ubmin}}}
\newcommand{\minval}{\text{upbd}_{\min}}
\newcommand{\accu}{\text{acc}}
\newcommand{\tPi}{{\Pi}}
\newif\ifnotes\notesfalse
\definecolor{mygrey}{gray}{0.50}
\newcommand{\notename}[2]{{\textcolor{mygrey}{\footnotesize{\bf (#1:} {#2}{\bf ) }}}}
\newcommand{\notename}[2]{{}}
\newcommand{\pnote}[1]{{\notename{Pranjal}{#1}}}
\newcommand{\anote}[1]{{\notename{Aravindan}{#1}}}
\title{Adversarial robustness via robust low rank representations}
\author{Pranjal Awasthi\thanks{Google Research and Department of Computer Science, Rutgers University. The author acknowledges support from the NSF HDR TRIPODS award CCF-1934924. {\tt pranjal.awasthi@rutgers.edu}.}
\and
 Himanshu Jain\thanks{Google Research} \and
 Ankit Singh Rawat\thanks{Google Research} \and
Aravindan Vijayaraghavan\thanks{
  Department of Computer Science,
  Northwestern University. Supported by the National Science Foundation (NSF) under Grant No.~CCF-1652491 and CCF-1637585. {\tt aravindv@northwestern.edu}. } 
}
\date{}
\begin{document}
\maketitle

\begin{abstract}
Adversarial robustness measures the susceptibility of a classifier to imperceptible perturbations made to the inputs at test time. In this work we highlight the benefits of natural low rank representations that often exist for real data such as images, for training neural networks with certified robustness guarantees.




Our first contribution is for certified robustness to perturbations measured in $\ell_2$ norm. We exploit low rank data representations to provide improved guarantees over state-of-the-art randomized smoothing-based approaches on standard benchmark datasets such as CIFAR-10 and CIFAR-100. 

Our second contribution is for the more challenging setting of certified robustness to perturbations measured in $\ell_\infty$ norm. We demonstrate empirically that natural low rank representations have inherent robustness properties, that can be leveraged to provide significantly better guarantees for certified robustness to $\ell_\infty$ perturbations in those representations. Our certificate of $\ell_\infty$ robustness relies on a natural quantity involving the $\infty \to 2$ matrix operator norm associated with the representation,  to translate robustness guarantees from $\ell_2$  to $\ell_\infty$ perturbations. 
%
A key technical ingredient for our certification guarantees is a fast algorithm with provable guarantees based on the multiplicative weights update method to provide upper bounds on the above matrix norm. Our algorithmic guarantees improve upon the state of the art for this problem, and may be of independent interest. 

\end{abstract}

\section{Introduction}\label{sec:intro}
It is now well established across several domains like images, audio and natural language, that small input perturbations that are imperceptible to humans can fool deep neural networks at test time~\cite{szegedy2013intriguing, biggio2013evasion,carlini2018audio, ebrahimi2017hotflip}. 
This phenomenon known as {\em adversarial robustness} has led to flurry of research in recent years~(see Section~\ref{app:related} for a discussion of related work). Following most prior work in this area~\cite{goodfellow2014explaining, madry2017towards, zhang2019theoretically, shafahi2019adversarial, wong2020fast}, we will study the setting where adversarial perturbations to an input $x$ are measured in an $\ell_p$ norm~($p=2$ or $p=\infty$). 

In this work, we study methods for {\em certified adversarial robustness} in the framework developed in~\cite{lecuyer2019certified, cohen2019certified}. The goal is to output a classifier $f$ that on input $x \in \R^n$ outputs a prediction $y$ in the label space $\calY$,  along with a certified radius $r_f(x)$. The classifier is guaranteed to be robust at $x$ up to the radius $r_f(x)$ (with high probability), i.e., $\forall z: \|z\|_p \leq r_f(x),  f(x+z)=f(x)$. For an $\ell_p$ norm and $\epsilon > 0$, the certified accuracy of a classifier $f$ is defined as
\begin{align}
    \label{eq:def-certified-acc}
    {\accu}^{(\ell_p)}_\epsilon(f) &= \Pr_{(x,y) \sim D} \big[f(x)=y \text{ and } r_f(x) \geq \epsilon \big],
\end{align}
where $D$ is the underlying data distribution generating test inputs. We call the radius $r_f(x)$ returned by the classifier as the {\em certified radius} on $x$. When $\epsilon = 0$ this is the natural accuracy of $f$.

For certified adversarial robustness to $\ell_2$ perturbations, the {\em randomized smoothing} procedure proposed in~\cite{lecuyer2019certified, cohen2019certified} is a simple and efficient method that can be applied to {\em any} neural network. 
Randomized smoothing works by creating a smoothed version of a given classifier by adding Gaussian noise to the inputs (see Section~\ref{sec:l2}). The smoothed classifier exhibits certain Lipschitzness properties, and one can derive good certified robustness guarantees from it. The study of randomized smoothing for certified $\ell_2$ robustness is an active research area and the current best guarantees are obtained by incorporating the smoothed classifier into the training process~\cite{salman2019provably} (see Section~\ref{app:related}).

It seems much more challenging to obtain certified adversarial robustness to $\ell_\infty$ perturbations~\cite{raghunathan2018semidefinite, gowal2018effectiveness, wong2018provable}. In particular, the design of a procedure akin to randomized smoothing has been difficult to achieve for $\ell_\infty$ perturbations. 
\anote{mention non basis independence of $\ell_\infty$ somewhere else}
One approach to obtain certified $\ell_\infty$ robustness is to translate a certified radius guarantee of $\epsilon$ for $\ell_2$ perturbations~(via randomized smoothing) into an $\epsilon/\sqrt{n}$ certified radius guarantee for $\ell_\infty$ perturbations; here $n$ is the dimensionality of the ambient space. 
Furthermore, recent work~\cite{blum2020random, yang2020randomized, kumar2020curse} has established lower bounds proving that randomized smoothing based methods cannot break the above $\sqrt{n}$ barrier for $\ell_\infty$ robustness in the worst case.

However real data such as images are not worst case and often exhibits a natural low rank structure. In this work we show how we can leverage such natural low-rank representations for the data, in order to design algorithms based on randomized smoothing with improved certified robustness guarantees for both $\ell_2$ and $\ell_\infty$ perturbations. 

\vspace{-5pt}

\paragraph{Our Contributions.} We now describe our main contributions.

\vspace{4pt}
\noindent {\em Improved certified $\ell_2$ robustness: } Our first contribution is to design new smoothed classifiers for achieving certified robustness to $\ell_2$ perturbations. These classifiers achieve improved tradeoffs between natural accuracy and certified accuracy at higher radii.
We achieve this by leveraging the existence of good low-rank representation for the data. 
%
We modify the randomized smoothing approach to instead selectively inject more noise along certain directions, without compromising the accuracy of the classifier. 
The large amount of noise leads to classifier that is less sensitive to $\ell_2$ perturbations, and hence achieves higher certified accuracy across a wide range of radii. We empirically demonstrate the improvements obtained by our approach on image data in Section~\ref{sec:l2}. 

\vspace{4pt}
\noindent {\em Fast algorithms for translating certified robustness guarantees from $\ell_2$ to $\ell_\infty$:} 
For the more challenging setting of $\ell_\infty$ robustness we consider classifiers of the form $f(Px)$ where $P$ is an arbitrary linear map, and $f$ represents an arbitrary neural network. 
When translating certified robustness guarantees for $\ell_2$ perturbations to obtain guarantees for $\ell_\infty$ perturbations, the loss incurred is captured by the $\infty \to 2$ operator norm of matrix $P$. 
While computing this operator norm is NP-hard, we design a fast approximate algorithm based on the multiplicative weights update method with provable guarantees. Our algorithmic guarantees give significant improvements over the best known bounds~\cite{AroraHK, Kalethesis} for this problem and may be of independent interest (see Section~\ref{sec:mw}). 

\vspace{4pt}
\noindent {\em Certified $\ell_\infty$ robustness in natural data representations:} Real data such as images have natural representations that are often used in image processing e.g., via the Discrete Cosine Transform (DCT). Via an empirical study we highlight the need for achieving $\ell_\infty$ robustness in the DCT basis. 
More importantly, we demonstrate that the representation in the DCT basis is robust, i.e., there exist low rank projections that capture most of the signal in the data and that at the same time have small $\infty \to 2$ operator norm.\footnote{This is also true for domains such as audio in the DCT basis. See Appendix~\ref{app:audio} for experimental evidence.}
We develop a fast heuristic based on sparse PCA to find such robust projections. When combined with our multiplicative weights based algorithm, this leads to a new training procedure based on randomized smoothing. Our procedure can be applied to any network architecture and
provides stronger guarantees on robustness to $\ell_\infty$ perturbations in the DCT basis. 


\section{Certified Robustness to $\ell_2$ Perturbations}
\label{sec:l2}
We build upon the {\em randomized smoothing} technique proposed in \cite{lecuyer2019certified, cohen2019certified} and further developed in \cite{salman2019provably}. Consider a multiclass classification problem and a classifier $f: \R^n \to \mathcal{Y}$, where $\mathcal{Y}$ is the label set.
Given $f$, randomized smoothing produces a smoothed classifier $g$ where
\begin{align}
    \label{eq:smoothed-classifier}
    g(x) = \arg \max_{y \in \mathcal{Y}} \mathbb{P}(f(x+\delta) = y).
\end{align}
Here $\delta \sim N(0, \sigma^2 I)$ is the Gaussian noise added. The following proposition holds. 
\begin{proposition}[\cite{lecuyer2019certified, cohen2019certified}]
Given a classifier $f$, let $g$ be its smoothed version as defined in \eqref{eq:smoothed-classifier} above. On an input $x$, and for $\delta \sim N(0,\sigma^2 I)$ define $y_A \coloneqq \arg \max_y \mathbb{P}(f(x+\delta) = y)$, and let $p_A = \mathbb{P}(f(x+\delta) = y_A)$. Then the prediction of $g$ at $x$ is unchanged up to $\ell_2$ perturbations of radius \begin{align}
\label{eq:certified-radius}
r(x) = \frac{\sigma}{2} \big(\Phi^{-1}(p_A) - \Phi^{-1}(p_B)\big).
\end{align}
Here $p_B = \max_{y \neq y_A} \mathbb{P}(f(x+\delta) = y)$ and $\Phi^{-1}$ is the inverse of the standard Gaussian CDF.
\end{proposition}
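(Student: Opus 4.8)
The plan is to fix an arbitrary perturbation $z$ with $\|z\|_2 \le r(x)$ and show that $g(x+z)=y_A$, which amounts to verifying that $y_A$ still wins the plurality vote: $\Pr(f(x+z+\delta)=y_A) > \max_{y\neq y_A}\Pr(f(x+z+\delta)=y)$. The difficulty is that $f$ is an arbitrary black-box classifier, so I cannot evaluate these probabilities directly; all I know is the unperturbed information $\Pr(f(x+\delta)=y_A)=p_A$ and $\Pr(f(x+\delta)=y)\le p_B$ for every $y\neq y_A$. I would therefore bound the worst case over all classifiers consistent with this information, and the right tool is the Neyman--Pearson lemma.

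Concretely, write $X \sim N(x,\sigma^2 I)$ and $Y \sim N(x+z,\sigma^2 I)$ for the noise-perturbed inputs before and after the adversarial shift. For a fixed label $y$ the event $\{f=y\}$ is some measurable set $S\subseteq\R^n$; I know $\Pr_X(S)$ and want to control $\Pr_Y(S)$. The key structural fact is that the likelihood ratio $p_Y(t)/p_X(t)$ between these two isotropic Gaussians depends on $t$ only through the linear functional $\langle z,t\rangle$ and is monotone in it, so its level sets are exactly the half-spaces with normal $z$. Neyman--Pearson then says that among all sets of a prescribed $X$-measure, a half-space $\{t:\langle z/\|z\|_2,\,t-x\rangle \le c\}$ simultaneously minimizes (or, in the complementary direction, maximizes) the $Y$-measure.

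This reduces everything to a one-dimensional Gaussian computation: projecting onto the unit vector $z/\|z\|_2$ turns $X$ into $N(0,\sigma^2)$ and $Y$ into $N(\|z\|_2,\sigma^2)$ along that axis. Choosing the threshold so the half-space has $X$-measure exactly $p_A$ forces $c=\sigma\Phi^{-1}(p_A)$, and evaluating its $Y$-measure gives the worst-case bound $\Pr_Y(f=y_A)\ge \Phi(\Phi^{-1}(p_A)-\|z\|_2/\sigma)$. Running the same argument in the opposite direction for any runner-up class yields $\Pr_Y(f=y)\le \Phi(\Phi^{-1}(p_B)+\|z\|_2/\sigma)$. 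Since $\Phi$ is strictly increasing, $y_A$ keeps winning precisely when $\Phi^{-1}(p_A)-\|z\|_2/\sigma > \Phi^{-1}(p_B)+\|z\|_2/\sigma$, i.e. when $\|z\|_2 < \tfrac{\sigma}{2}\big(\Phi^{-1}(p_A)-\Phi^{-1}(p_B)\big)=r(x)$, which is exactly the claimed radius.

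The main obstacle I anticipate is the careful invocation of Neyman--Pearson: one has to argue that the extremal region really is a half-space orthogonal to $z$ (this is where monotonicity of the Gaussian likelihood ratio in $\langle z,t\rangle$ is essential), handle the choice of threshold (harmless here since the Gaussian is atomless, so any target measure $p_A$ is attained exactly without randomized boundary decisions), and correctly pass from the single runner-up $y_B$ to the maximum over all $y\neq y_A$ via the uniform upper bound $p_B$. The remaining steps are routine one-dimensional tail evaluations together with monotonicity of $\Phi$.
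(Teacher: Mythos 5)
Your argument is correct and is essentially the canonical Neyman--Pearson proof of this result from Cohen et al., which is precisely the source the paper cites; the paper itself supplies no proof of this proposition. The only point worth tightening is the boundary case: the strict separation $\Phi^{-1}(p_A)-\|z\|_2/\sigma > \Phi^{-1}(p_B)+\|z\|_2/\sigma$ requires $\|z\|_2 < r(x)$ rather than $\le r(x)$, matching the strict-inequality form of the guarantee in the original reference.
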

Hence, randomized smoothing provides a fast method to certify the robustness of any given classifier on various inputs. 
In order to get robustness to large perturbations it is desirable to choose the noise magnitude $\sigma$ as large as possible. However, there is a natural tradeoff between the amount of noise added and the natural accuracy of the classifier.
As an example consider an input $x \in \R^n$ of $\ell_2$ length $\sqrt{n}$. If $\sigma$ is the average amount of noise added then one is restricted to choosing $\sigma$ to be a small constant in order for the noise to not overwhelm the signal. 

However, it is well known that natural data such as images are low dimensional in nature. Figure~\ref{fig:pca_error_vs_d} in Appendix~\ref{sec:app-l2} shows that for the CIFAR-10 and CIFAR-100 datasets, even when projected, via  PCA, onto $200$ dimensions, the reconstruction error remains small.
If the input is close to an $r$-dimensional subspace, then it is natural to add noise only within the subspace for smoothing. Formally, let $\Pi$ be the projection matrix on to an $r$-dimensional subspace and $x$ be such that $\|\Pi x\|_2 \approx \|x\|_2 = \sqrt{n}$. For $\delta \sim N(0,\sigma^2 I)$ we have $\|\Pi \delta\|_2 \approx \sigma \sqrt{r}$. Hence if we only add noise within the subspace, then $\sigma$ can be as large as $\sqrt{n/r}$ as opposed to a constant without significantly affecting the natural accuracy.

We formalize this into an efficient training algorithm as follows: we take a base classifier/neural network $f(x)$ and replace it with the smoothed classifier $g_{\Pi}(x)$ where
\begin{align}
    \label{eq:smoothed-projected-classifier}
    g_\Pi(x) = \arg \max_{y \in \mathcal{Y}} \mathbb{P}(f(\Pi x+\delta_\Pi) = y).
    \end{align}
where $\Pi$ is a projection matrix onto an $r$-dimensional subspace and $\delta_\Pi$ is a standard Gaussian of variance $\sigma^2$ that lies within $\Pi$. For data such as images, good projections $\Pi$ can be obtained via methods like PCA. Furthermore, certifying the robustness of our proposed smoothed classifier can be easily incorporated into existing pipelines for adversarial training with minimal overhead. In particular using the rotational symmetry of Gaussian distributions it is easy to show the following
\begin{proposition}
\label{prop:l2-equivalence}
Given a base classifier $f: \R^n \to \mathcal{Y}$ and a projection matrix $\Pi$, on any input $x$, the smoothed classifier $g_\Pi(x)$ as defined in \eqref{eq:smoothed-projected-classifier} is equivalent to the classifier given by
\begin{align}
    \label{eq:smoothed-projected-classifier-v2}
    \tilde{g}_\Pi(x) = \arg \max_{y \in \mathcal{Y}} \mathbb{P}(f \big(\Pi (x+\delta) \big) = y).
    \end{align}
    Here $\delta$ is a standard Gaussian of variance $\sigma^2$ in every direction. 
\end{proposition}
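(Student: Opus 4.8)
The plan is to reduce both classifiers to the same underlying distribution on the argument fed into $f$, so that the two probability vectors $\big(\mathbb{P}(f(\cdot)=y)\big)_{y \in \calY}$ coincide pointwise and their argmaxes therefore agree. First I would use linearity of the projection to rewrite the argument inside $\tilde g_\Pi$ as $\Pi(x+\delta) = \Pi x + \Pi \delta$. The two classifiers thus feed $f$ the inputs $\Pi x + \delta_\Pi$ and $\Pi x + \Pi\delta$, where $\delta_\Pi$ is a variance-$\sigma^2$ Gaussian supported on the range of $\Pi$ and $\delta \sim N(0,\sigma^2 I)$ is isotropic on all of $\R^n$. Since $\Pi x$ is a deterministic shift common to both, it suffices to show that the random vectors $\Pi\delta$ and $\delta_\Pi$ have the same law.

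The key step is to establish $\Pi\delta \stackrel{d}{=} \delta_\Pi$ via the rotational symmetry of the isotropic Gaussian. I would choose an orthonormal basis $u_1,\dots,u_n$ of $\R^n$ adapted to the projection, with $u_1,\dots,u_r$ spanning the $r$-dimensional range of $\Pi$ and $u_{r+1},\dots,u_n$ spanning its orthogonal complement. Writing $\delta = \sum_{i=1}^n c_i u_i$ with $c_i = \langle \delta, u_i\rangle$, the coordinates are i.i.d.\ $N(0,\sigma^2)$ because $N(0,\sigma^2 I)$ is invariant under the orthogonal change of basis. Applying $\Pi$ annihilates the complement and leaves $\Pi\delta = \sum_{i=1}^r c_i u_i$, which is exactly a variance-$\sigma^2$ Gaussian supported on the range of $\Pi$, i.e.\ distributed identically to $\delta_\Pi$.

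Finally, since $\Pi x + \delta_\Pi$ and $\Pi(x+\delta)$ are equal in distribution, for every label $y$ we have $\mathbb{P}(f(\Pi x + \delta_\Pi) = y) = \mathbb{P}(f(\Pi(x+\delta)) = y)$, so the two functions being maximized over $y$ are identical and the argmaxes coincide (under any fixed tie-breaking rule); hence $g_\Pi(x) = \tilde g_\Pi(x)$ for all $x$. I do not anticipate a genuine obstacle: the only point requiring care is justifying that the coordinates of $\delta$ in the adapted orthonormal basis remain i.i.d.\ Gaussian, which is precisely the orthogonal invariance of the spherical Gaussian, and everything else is linearity together with the definition of the argmax classifier.
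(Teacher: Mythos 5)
Your proposal is correct and follows essentially the same route as the paper's own proof: both reduce the claim to showing $\Pi\delta \stackrel{d}{=} \delta_\Pi$ via the rotational invariance of the spherical Gaussian, the paper invoking this projection property directly while you spell it out with an adapted orthonormal basis. No gaps.
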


Hence constructing our proposed smoothed classifier simply requires adding a linear transformation layer to any existing network architecture before training via randomized smoothing. We propose to train the smoothed classifier as defined in \eqref{eq:smoothed-projected-classifier-v2} by minimizing its adversarial standard cross entropy loss as proposed in~\cite{madry2017towards}. However, since dealing with $\arg \max$ is hard from an optimization point of view, we follow the approach of \cite{salman2019provably} and instead minimize the cross entropy loss of the following soft classifier
\begin{align}
    \label{eq:soft-classifier}
    G_\Pi(x) = \E_{\delta \sim N(0,\sigma^2 I)}[f\big(\Pi (x+\delta) \big)].
\end{align}
This leads to the following objective where $\ell_{ce}$ is the standard cross-entropy objective and $\epsilon > 0$ is perturbation radius chosen for the training procedure.
\begin{align}
\label{eq:pgd-objective}
    \arg \min_f \E_{(x,y)} \big[\max_{z: \|z\|_2 \leq \epsilon} \ell_{ce}(G_\Pi(x+z),y) \big].
\end{align}
Following \cite{madry2017towards,  salman2019provably}, the inner maximization problem of finding adversarial perturbations is solved via projected gradient descent~(PGD), and given the adversarial perturbations, the outer minimization uses stochastic gradient descent. 
Overall, this leads to the following training procedure.
\begin{algorithm}[H]
\caption{Adversarial training via projections}
\label{algo:l2-training}
\begin{algorithmic}[1]
\Function{RobustTrain}{training data $(x_1, y_1), \dots, (x_m, y_m)$, subspace rank $r$, base noise magnitude $\sigma$, $\lambda \in [0,1]$, number of steps $T$, mini batch size $b$} 
\State Perform PCA on (unlabeled) data matrix $A \in \R^{n \times m}$ to obtain a rank-$r$ projection matrix $\Pi$. 
\State Set $G_\Pi$ as in \eqref{eq:soft-classifier} with $\sigma = \lambda \sqrt{n/r}$. 
\For{$t = 1,\dots, T$}
\State Obtain a mini batch of $b$ examples $(x_{t_1}, y_{t_1}), \dots, (x_{t_b}, y_{t_b})$. 
\State For each $x_{t_i}$ use projected gradient ascent on inner maximization in \eqref{eq:pgd-objective} to get $x'_{t_i}$.
\State Given perturbed examples $\{(x'_{t_i}, y_{t_i})\}_{i \in [b]}$, update network parameters via SGD.
\EndFor
\State Output the smoothed classifier $\tilde{g}_\Pi(x)$.
\EndFunction

\end{algorithmic}
\end{algorithm}
\vspace{-10pt}


\noindent \textbf{Empirical Evaluation.} We compare Algorithm~\ref{algo:l2-training} with the algorithm of \cite{salman2019provably} for various values of $\sigma$ and $\epsilon$ (used for training to optimize \eqref{eq:pgd-objective}). We choose $\epsilon \in \{0.25, 0.5, 0.75, 1.0\}$ and for each $\epsilon$ we choose the value of $\sigma$ as described in~\cite{salman2019provably}. In each case, we train the classifier proposed in \cite{salman2019provably} using a noise magnitude $\sigma$, and we train our proposed smoothed classifier using higher noise values of $\lambda \sigma \sqrt{n/r}$, where $\lambda$ is a parameter that we vary. In all experiments, we train a ResNet-32 network on the CIFAR-10 dataset by optimizing \eqref{eq:pgd-objective}. Figure \ref{fig:acc_tradeoff_cifar10} shows a comparison of certified accuracies for different radii and different values of $\lambda$.  See Appendix~\ref{sec:app-l2} for a description of the hyperparameters and additional experiments. 

\vspace{-8pt}
\begin{SCfigure}[][htbp]
    \centering
   
   \caption{\label{fig:acc_tradeoff_cifar10}Plot of certified accuracy achieved at different radii when different values of $\lambda$ are chosen to optimize the smoothed classifier in \eqref{eq:smoothed-projected-classifier-v2}. We compare with the method of \cite{salman2019provably} The plot is obtained by training a ResNet-32 architecture on CIFAR-10 with $\epsilon=1.0$. Note that the $y$-intercept of each curve represents the natural accuracy of the corresponding classifier.} 
        \includegraphics[width=0.56\textwidth]{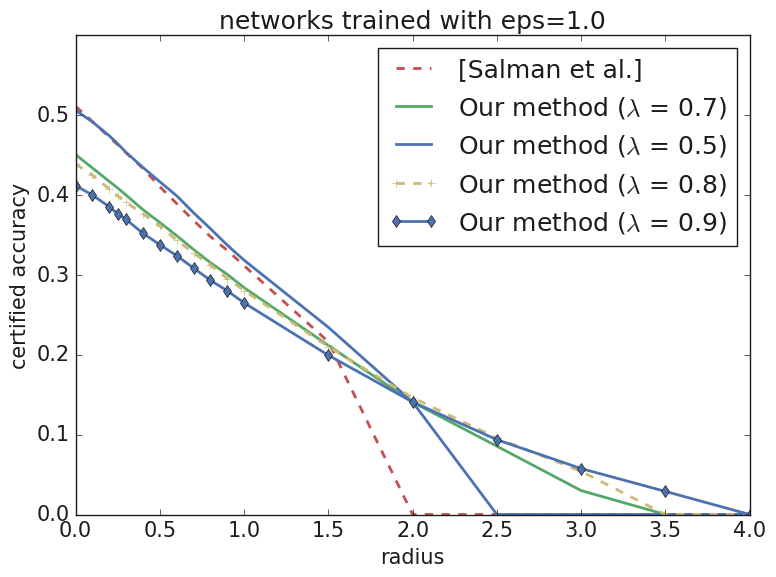} 

\end{SCfigure}
\vspace{-8pt}

As can be seen from the figure, varying the value of $\lambda$ lets us tradeoff lower accuracy at small values of the radius for a significant gain in certified accuracy at higher radii as compared to the method of \cite{salman2019provably}. In particular we find that choosing values of $\lambda$ close to $0.5$ leads to networks that can certify accuracy at much higher radii with minimal to no loss in the natural accuracy as compared to the approach of \cite{salman2019provably}. In Figure \ref{fig:l2} we present the result of our training procedure for various values of $\epsilon$ and $\sigma$ and compare with the $\ell_2$ smoothing method of \cite{salman2019provably} on the CIFAR-10 and CIFAR-100 datasets. In each case, to obtain the projection matrix $\Pi$ we perform a PCA onto each image channel separately and use the top $200$ principal components to obtain the projection matrix $\Pi$. 
For both datasets, our trained networks outperform the method of~\cite{salman2019provably} across a large range of radius values. In particular, for higher values of radius (say, $\gtrapprox 0.5$) our method achieves a desired certified accuracy with significantly higher natural accuracy as compared to the method of~\cite{salman2019provably}. For instance in the CIFAR-10 dataset, at a radius of $1.0$ and a desired certified accuracy of at least $0.35$, the method of~\cite{salman2019provably} achieves a natural accuracy of $\approx 0.5$ (yellow dotted curve at radius $0$). In contrast our method achieves the same with a natural accuracy of $\approx 0.65$ (green solid curve at radius $0$). On the other hand, at very small radius values the method of~\cite{salman2019provably} is better. This is expected as we suffer a small loss in natural accuracy due to the PCA step in Algorithm~\ref{algo:l2-training}. 

\begin{figure}[t]
\centering
\subfloat{\includegraphics[width=6.5cm]{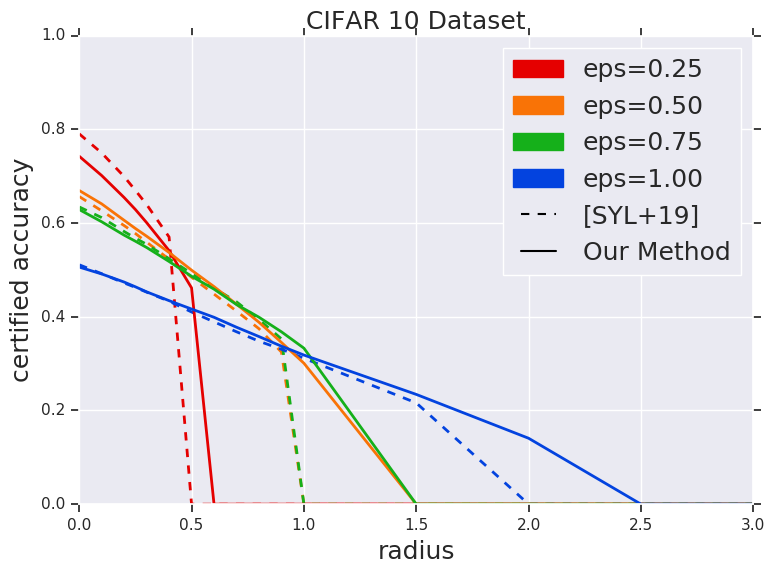}}\hfil
\subfloat{\includegraphics[width=6.5cm]{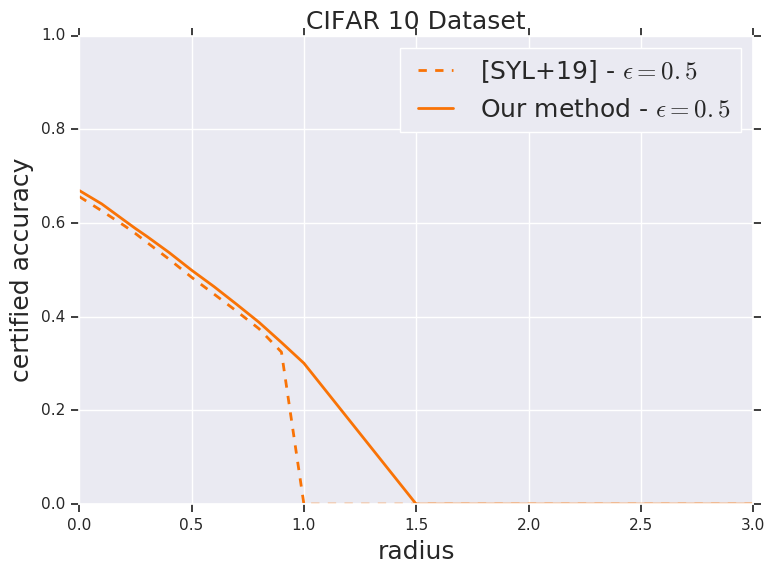}}\hfil

\subfloat{\includegraphics[width=6.5cm]{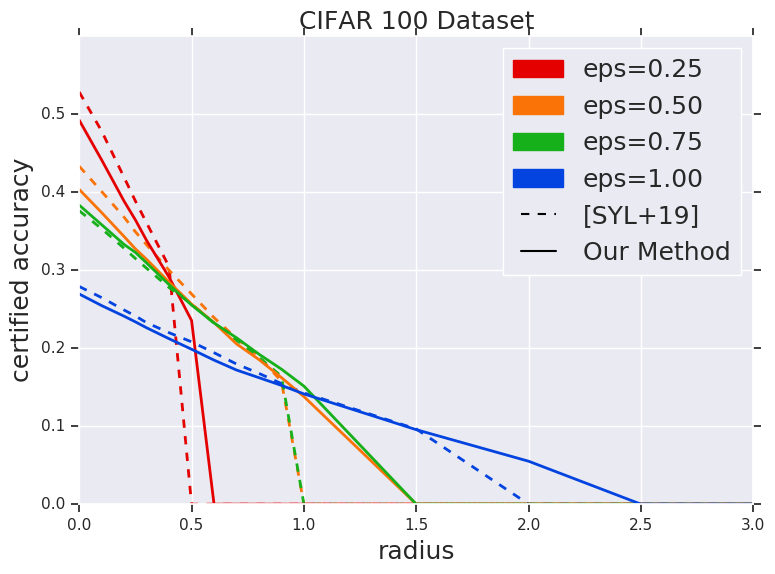}}\hfil
\subfloat{\includegraphics[width=6.5cm]{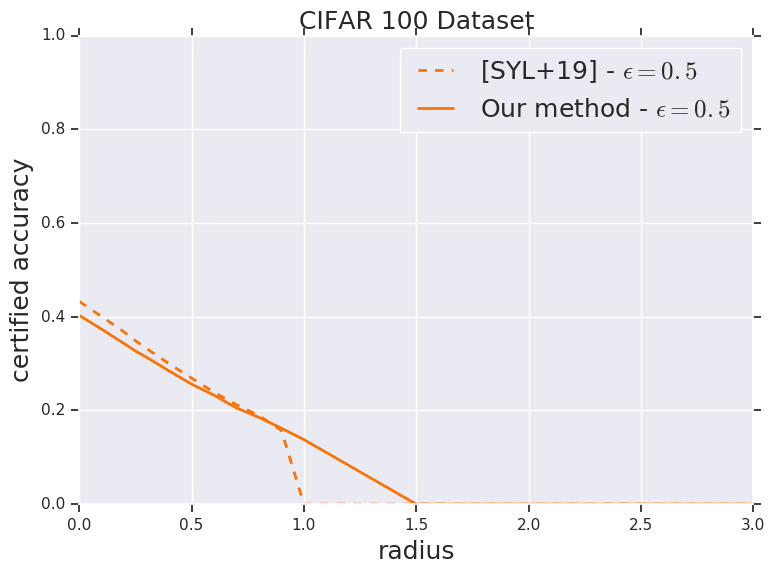}}\hfil
\caption{\label{fig:l2} A comparison of certified radius guarantees obtained via Algorithm~\ref{algo:l2-training} as compared to the approach of~\cite{salman2019provably}. The x-axis is the radius, and the y-axis represents the certified accuracy. The top row describe results for the CIFAR-10 dataset -- (left) certified accuracies for various values of $\epsilon$, (right) for $\epsilon=0.5$. Similarly, the bottom row describe the results for the CIFAR-100 dataset.}
\vskip -.15in
\end{figure}

\vspace{-10pt}

\section{Methods for Certified $\ell_\infty$ Robustness} \label{sec:linf:methods}


We now describe our algorithms for the more challenging problem of certified robustness to $\ell_\infty$ perturbations in a given basis or representation. 
Our approach is to leverage the existence of good representations of natural data measured by a certain robustness parameter, and translate $\ell_2$ robustness guarantees from Section~\ref{sec:l2} to get certified $\ell_\infty$ guarantees. 
Consider $f: \R^n \to \calY$ and $g(x):=f(\Pi x)$, where $\Pi \in \R^{n \times n}$ represents a projection matrix. 
The certified accuracy of $g$ satisfies
\begin{equation}\label{eq:linftol2}
\forall \eps>0, ~~ \accu^{(\ell_\infty)}_{\eps'}(g) \ge \accu^{(\ell_2)}_{\eps}(g) \text{ for } 0 \le \eps' \le \eps/ \norm{\Pi}_{\infty \to 2}, ~~\text{where } \norm{\Pi}_{\infty \to 2}= \max_{x:
\norm{x}_\infty \le 1} \norm{\Pi x}_2  \nonumber
\end{equation}
is the $\infty \to 2$ operator norm of $\Pi$ and represents a robustness parameter
(see Proposition~\ref{prop:linftol2} for a formal claim).
Hence, to translate guarantees from $\ell_2$ to $\ell_\infty$, we look for {\em robust} projections $\Pi$ that have small $\infty \to 2$ operator norm.
Our approach is inspired by the recent theoretical work of ~\cite{ACCV}, and finds a low-dimensional representation given by an (orthogonal) projection matrix $\Pi$ for the data that has small $\norm{\Pi}_{\infty \to 2}$. 
By matrix norm duality, there is a nice characterization for $\norm{\Pi}_{\infty \to 2}$ as the maximum $\ell_1$ norm among Euclidean unit vectors in the subspace of $\Pi$ (this is a notion of sparsity of the vectors). 
For a rank $r$ projector $\Pi$, the range of values taken by $\norm{\Pi}_{\infty \to 2}$ is $[\sqrt{r},\sqrt{n}]$. Hence if the projection is not low-rank and sparse, $\norm{\Pi}_{\infty \to 2}$ could be as large as $\sqrt{n}$ (e.g., when $\Pi=I$). This is consistent with the loss of $\sqrt{n}$ factor in robustness radius to $\ell_\infty$ perturbations for general datasets~\cite{blum2020random, yang2020randomized}. Moreover as we have seen in Section~\ref{sec:l2}, good low-rank representations of the data also give stronger certified $\ell_2$ robustness guarantees (and in turn, stronger certified $\ell_\infty$ guarantees). 

Our goal is to find a good robust rank-$r$ projection of the data if it exists. 
We propose a heuristic based on sparse PCA~\cite{mairal2009sparsePCA} to find a robust low rank projection with low reconstruction error (see Section~\ref{sec:robustprojheuristic}).
Since we aim for certified robustness, an important step in this approach is to compute and certify an upper bound on  
$\norm{\Pi}_{\infty \to 2}$, for our candidate projection $\Pi$. This is an NP-hard problem, related to computing the famous Grothendieck norm~\cite{alon2004approximating}. We describe a new, scalable, approximate algorithm for computing upper bounds on $\norm{\Pi}_{\infty \to 2}$ with provable guarantees.


\ifNeurIPS
\paragraph{Certifying the $\infty \to 2$ operator norm.} \label{sec:mw}
\else
\subsection{Certifying the $\infty \to 2$ operator norm.} \label{sec:mw}

Our fast algorithm is based on the multiplicative weights update (MWU) method for approximately solving a natural semi-definite programming (SDP) relaxation, and produce a good upper bound on $\norm{\Pi}_{\infty \to 2}$. 
Our upper bound also comes with a certificate from the dual SDP i.e., a short proof of the correctness of the upper bound.  
Given a candidate $\Pi$ our algorithm will compute an upper bound for $\norm{\Pi}_{\infty \to 1}$ which by matrix norm duality satisfies 
\begin{equation}\label{eq:normduality}
    \| \Pi\|_{\infty \to 2}^2 = \|\Pi\|_{\infty \to 1} = \max_{x}x^\top \Pi x~\text{ subject to } \norm{x}_\infty \le 1. 
\end{equation}
In fact our algorithm will work for the following more general problem of Quadratic Programming: 
\begin{equation} \label{eq:QP:problem}
\text{Given a symmetric matrix }M~\text{with } ~\forall i \in [n]:~ M_{ii} \ge 0 ,~\quad \max_{x: \|{x}\|_\infty \le 1}x^\top M x.
\end{equation}

%
 The standard SDP relaxation for the problem (see \eqref{eq:primal} in Appendix~\ref{sec:mw-app}), has primal variables represented by the positive semi-definite (PSD) matrix $X \in \R^{n \times n}$ satisfying constraints $X_{ii} \le 1$ for each $i \in [n]$. The SDP dual of this relaxation (given in \eqref{eq:dual} of Appendix~\ref{sec:mw-app}) has variables $y_1, \dots, y_n \ge 0$ corresponding to the $n$ constraints in the primal SDP. 
Since the SDP is a valid relaxation for \eqref{eq:QP:problem}, it provides an upper bound for $\infty \to 1$ operator norm\footnote{A fast algorithm that potentially finds a local optimum for the problem will not suffice for our purposes; we need an upper bound on the global optimum.} . Classical 
results show that it is always within a factor of $\pi/2$ of the actual value of $\norm{M}_{\infty \to 1}$~\cite{nesterov1998semidefinite,alon2006quadratic}. 
However, it is computationally intensive to solve the SDP using off-the-shelf SDP solvers (even for CIFAR-10 images, $X$ is $1024 \times 1024$). We design a fast algorithm based on the {\em multiplicative weight update} (MWU) framework~\cite{kleinLu,AroraHK}.  

%


\paragraph{Description of the algorithm}

Our algorithm differs slightly from the standard MWU approach for solving the above SDP. The algorithm below takes as input a matrix $M$ and always returns a valid upper bound $\minval$ on the SDP value, along with a dual feasible solution $\miny$ that can act as a certificate, 
and a candidate primal solution $X$ that attains the same value (and is potentially feasible). Theorem~\ref{thm:sdp-combined} proves that for the right setting of parameters, particularly the number of iterations $T_f = O(n \log n/ \delta^3)$, the solution $X$ is also guaranteed to be feasible up to small error $\delta>0$.

\begin{algorithm}[H]
\caption{Fast Certification of $\infty \to 1$ norm and Quadratic Programming}
\label{algo:sdp}
\begin{algorithmic}[1]
\Function{CertifySDP}{$M \in \R^{n \times n}$, iteration bound $T_f$, slack $\delta$, damping $\rho$}
\State Initialize $\alpha=(1,1,\dots,1) \in \R^n$. primal $X=0$, dual $y=0^n$, $\minval=\infty$, $\miny=0^n$.
\For{$t=0,1 \dots,  T$} 
    \State $\tilde{\alpha} \gets (1-\delta)\alpha+\delta (1,1,\dots, 1)$. 
    \State $\lambda \gets \text{max-eigenvalue}(\diag(\tilde{\alpha})^{-1/2} M \diag(\tilde{\alpha})^{-1/2})$ and $u \in \R^n$ be its eigenvector. 
    \State $v \gets \sqrt{n} \cdot \diag(\tilde{\alpha})^{-1/2} u $, 
    $y \gets \tfrac{1}{t+1} (t y + \lambda \tilde{\alpha})$, $X \gets \tfrac{1}{t+1}(t X + vv^\top)$. \label{eq:iterate:def}
     \If {$\norm{v}_\infty \le 1+\delta$ or $\max_i X_{ii} \le 1+\delta$ }, do early stop and return appropriate values. 
     \EndIf
    \State Update $\forall i\in [n], \alpha(i) \gets \alpha(i) \exp\big(\frac{\delta}{\rho}(v(i)^2 -1)\big)$, and renormalize s.t. $\sum_{i=1}^n \alpha(i)=n$.
    \State {\bf if} $\minval> n \lambda$, {\bf then} set $\minval=n \lambda$ and $\miny=y$.
\EndFor   
\State Output $\minval$, dual solution $\miny$, and primal candidate $X$. 

\EndFunction
\end{algorithmic}
\end{algorithm}

 Recall that from \eqref{eq:normduality} an estimate of the the $\infty \to 1$ norm immediately translates to an estimate of the $\infty \to 2$ norm.   
In the above algorithm, there are weights given by $\alpha$ for $n$ different constraints of the form $X_{ii} \le 1$. At each iteration, the algorithm maximizes the objective subject to {\em one} constraint of the form $\sum_i \tilde{\alpha}_i X_{ii} \le n$, where $\tilde{\alpha}$ involves a small correction to $\alpha$ that is crucial to ensure the run-time guarantees. The maximization is done using a maximum eigenvalue/eigenvector computation. The weights $\alpha$ are then updated using a multiplicative update based on the violation of the solution found in the current iterate.  
The damping factor $\rho$ determines the rate of progress of the iterations -- the smaller the value of $\rho$ the faster the progress, but a very small value may lead to oscillations. 
A more aggressive choice of $\rho$ compared  to the one in Theorem~\ref{thm:sdp-combined} seems to work well in practice. Finally, we remark that for {\em every} choice of $\alpha$ and $\rho$ we get a valid upper bound (due to dual feasibility). 
We show the following guarantee for our algorithm for the more general problem of \eqref{eq:QP:problem}.
%



\begin{theorem}\label{thm:sdp-combined}
Suppose $\delta>0$, and $M$ be any symmetric matrix with $M_{ii} \ge 0~\forall i \in [n]$. 
For any $\alpha \in \R_{\ge 0}^n $ with $\sum_{i=1}^n \alpha(i) =n$, if $\lambda = \lambda_{\max}\Big( (\diag(\alpha)^{-1/2} M \diag(\alpha)^{-1/2}) \Big)$, then $y=\lambda \alpha$ is feasible for the dual SDP and gives a valid upper bound of $n \lambda$ on the objective value for the SDP relaxation to \eqref{eq:QP:problem}. 
Moreover Algorithm~\ref{algo:sdp} on input $M$, with parameters $\delta$ and $\rho=O(n/\delta)$ after $T=O(n \log n/\delta^3)$ iterations finds a feasible SDP solution $\widehat{X} \succeq 0$ and a feasible dual solution $\widehat{y} \in \R^n$ that both sandwich the optimal SDP value within a $1+\delta$ factor. 
\end{theorem}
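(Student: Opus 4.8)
The plan is to establish the two assertions separately: first the static weak-duality statement (that $y=\lambda\alpha$ is dual feasible with objective value $n\lambda$), and then the convergence guarantee, which I would obtain by instantiating the multiplicative weights framework with the width controlled by the damping step. For the first assertion I would argue by a congruence transformation. By definition of $\lambda$ as the top eigenvalue, $\lambda I \succeq \diag(\alpha)^{-1/2} M \diag(\alpha)^{-1/2}$; conjugating both sides by $\diag(\alpha)^{1/2}$ (which preserves the positive semidefinite order) yields $\lambda\diag(\alpha)\succeq M$, i.e. $\diag(y)\succeq M$ for $y=\lambda\alpha$. Nonnegativity of $y$ follows because $M_{ii}\ge 0$ forces $\lambda\ge\max_i M_{ii}/\alpha_i\ge 0$ (the top eigenvalue of a symmetric matrix dominates each diagonal entry). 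Hence $y$ is feasible for the dual SDP $\min\{\sum_i y_i:\diag(y)\succeq M,\ y\ge 0\}$, and weak duality gives $\sdpval\le\sum_i y_i=\lambda\sum_i\alpha_i=n\lambda$; since the SDP is a relaxation of \eqref{eq:QP:problem}, $n\lambda$ bounds the quadratic program as well.

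For the convergence statement I would first record the per-iteration invariants produced by the eigenvector oracle. Writing $v=\sqrt n\,\diag(\tilde\alpha)^{-1/2}u$ with $u$ the unit top eigenvector of $\diag(\tilde\alpha)^{-1/2}M\diag(\tilde\alpha)^{-1/2}$, direct computation gives $v^\top M v=n\lambda$ and $\sum_i\tilde\alpha_i v_i^2=n\|u\|_2^2=n=\sum_i\tilde\alpha_i$. Thus every rank-one iterate $vv^\top$ attains objective $n\lambda$ while meeting the single aggregated constraint $\sum_i\tilde\alpha_i X_{ii}=n$ with equality. By the first assertion applied to $\tilde\alpha^{(t)}$ (which sums to $n$), each $\lambda^{(t)}\tilde\alpha^{(t)}$ is dual feasible and $n\lambda^{(t)}$ upper bounds $\sdpval$. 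Averaging, $\widehat y=\tfrac1T\sum_t\lambda^{(t)}\tilde\alpha^{(t)}$ is again dual feasible (the feasible region is convex) with value $\sum_i\widehat y_i=\tfrac1T\sum_t n\lambda^{(t)}$, and the averaged primal $\widehat X=\tfrac1T\sum_t v_tv_t^\top\succeq 0$ satisfies $\langle M,\widehat X\rangle=\tfrac1T\sum_t n\lambda^{(t)}=\sum_i\widehat y_i$, which is at least $\minval$.

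It then remains to show $\widehat X$ is almost feasible, $\max_i \widehat X_{ii}\le 1+O(\delta)$; rescaling $\widehat X$ by $1/\max_i\widehat X_{ii}$ makes it exactly feasible at the cost of a $(1+O(\delta))$ factor, which sandwiches $\sdpval$ between $\langle M,\widehat X\rangle/(1+O(\delta))$ and $\sum_i\widehat y_i$ and proves the claim after replacing $\delta$ by a constant multiple. To bound $\max_i\widehat X_{ii}$ I would run the Hedge potential argument treating the $n$ diagonal constraints as experts and the violations $v_i^2-1$ as gains. The damping $\tilde\alpha=(1-\delta)\alpha+\delta\mathbf{1}$ guarantees $\tilde\alpha_i\ge\delta$, hence $v_i^2=n u_i^2/\tilde\alpha_i\le n/\delta$, so the gains have width $\rho=O(n/\delta)$; this is precisely why $\rho$ is taken on that scale. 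The oracle identity $\sum_i\tilde\alpha_i(v_i^2-1)=0$ says the weighted gain vanishes each step, while $e^x\le 1+x+x^2$ together with the second-moment bound $\sum_i\tilde\alpha_i v_i^4\le\|v\|_\infty^2\sum_i\tilde\alpha_i v_i^2\le n^2/\delta$ controls the quadratic term; balancing learning rate $\eta=\delta$ against $\tfrac1\eta\ln n$ and running $T=O(n\log n/\delta^3)$ iterations drives the gain of the most-violated constraint $i^{*}=\argmax_i\widehat X_{ii}$, and hence $\max_i\widehat X_{ii}-1$, below $O(\delta)$.

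The main obstacle is this last width-reduced regret computation. One must carefully account for the mismatch between the distribution $\tilde\alpha/n$ appearing in the oracle's zero-gain identity and the weights $\alpha/n$ that actually evolve under the multiplicative update, and verify that the damping contributes only $O(\delta)$ additive slack per step rather than a constant. Forcing the second-order (width) term to scale like $\delta/n$, so that $\eta T$ times it stays $O(\delta)$, is exactly what pins down the choices $\rho=O(n/\delta)$ and $T=O(n\log n/\delta^3)$, and is the crux of the argument.
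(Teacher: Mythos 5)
Your proposal is correct and follows essentially the same route as the paper: dual feasibility of $y=\lambda\alpha$ via the congruence $\diag(\alpha)^{1/2}(\cdot)\diag(\alpha)^{1/2}$ plus weak duality, the per-iteration identities $v^\top Mv=n\lambda$ and $\sum_i\tilde\alpha_i v_i^2=n$ giving $\langle M,\widehat X\rangle=\sum_i\widehat y_i$ for the averaged iterates, and a Hedge/regret argument over the $n$ diagonal constraints with width $\rho=O(n/\delta)$ coming exactly from the damping $\tilde\alpha_i\ge\delta$, followed by rescaling $\widehat X$ by $1/(1+O(\delta))$. The points you flag as the crux (the $\tilde\alpha$ versus $\alpha$ mismatch and the width bound forcing $T=O(n\log n/\delta^3)$) are precisely the steps the paper works out in Lemma~\ref{lem:dualproperties} and the displays \eqref{eq:mw:1}--\eqref{eq:mw:2}.
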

(See Prop~\ref{prop:dual} and Theorem~\ref{thm:sdp-guarantee} in Appendix~\ref{sec:mw-app} for formal statements along with proofs. ) 
Each iteration only involves a single maximum eigenvalue computation, which can be done up to $(1+\delta)$ accuracy in $T_{eig}=\tilde{O}(m/\delta)$ time where $m$ is the number of non-zeros in $M$ (see e.g.,~\cite{kleinLu}). 
To the best of our knowledge, this gives significant improvements over the prior best bound of $\tilde{O}(n^{1.5} m/\delta^{2.5})$ runtime for solving the above SDP~\cite{AroraHK,Kalethesis}. Our algorithm and analysis differs from the general MWU framework~\cite{Kalethesis} by treating the objective differently from the constraints so that the ``width parameter'' does not depend on the objective. A crucial step in our algorithm and proof is to add a correction term of $O(\delta)$ to the weights in each step that ensures that the potential violation of each constraint in an iteration is bounded.
In addition our algorithm is more scalable than existing off-the-shelf methods. See Appendix~\ref{sec:mw-app} and \ref{app:sdp-runtme} for details and comparisons.  

\ifNeurIPS

\else
\subsection{Finding Robust Low-rank Representations} \label{sec:robustprojheuristic}

We now show how to find a good low-rank robust projections when it exists in the given representation. Given a dataset $A$, our goal is to find a (low-rank) projection $\Pi$ that gets small reconstruction error $\norm{A - \Pi A}_F^2$, while ensuring that $\norm{\Pi}_{\infty \to 2}$ is small. Awasthi et al.~\cite{ACCV} formulate this as an optimization problem that is NP-hard, but show polynomial time algorithms based on the Ellipsoid algorithm that gives constant factor approximations. However, the algorithm is impractical in practice because of the Ellipsoid algorithm, and the separation oracle used by it (that in turn involves solving an SDP). 
We instead use the connection to sparsity described in Section~\ref{sec:linf:methods} to devise a much faster heuristic based on sparse PCA to find a good projection $\Pi$ (see \eqref{fact:sparsityconnection} in the appendix for a formal justification). We just use an off-the-shelf heuristic for sparse PCA (the scikit-learn sparse PCA implementation~\cite{scikit-learn} based on ~\cite{mairal2009sparsePCA}), along with our certification procedure Algorithm~\ref{algo:sdp}). 

\begin{algorithm}[H]
\caption{Find a Robust Projection}
\label{algo:spca}
\begin{algorithmic}[1]
\Function{RobustProjection}{data $A \in R^{m \times n}$, rank $k$, reconstruction error $\delta$} 
\State Set $M:=(A^\top A)/\tr(A^\top A)$. Initialize $\widehat{\Pi} \gets \emptyset, \widehat{\kappa}=\infty$. 
\For{different values of $r \le k$}
\State Find $r$-PCA of the $M$ to get a rank $r$ projection $\Pi_1$. $M' \gets M - \Pi_1 M \Pi_1$  
\State Run sparse PCA on $M'$ to find a rank $(k-r)$ projection $\Pi_2$. Set $\Pi=\Pi_1 + \Pi_2$. 
\State Run $\textsc{CertifySDP}(\Pi,\delta=1/4,\rho)$ to get an upper bound $\kappa$.
\If{$\kappa < \widehat{\kappa}$ and if $\iprod{M, I-\Pi} \le \delta$}
    \State $\widehat{\Pi} \gets \Pi$, $\widehat{\kappa}=\kappa$.
\EndIf
\EndFor
\State Output $\widehat{\Pi}$, $\widehat{\kappa}$. 
\EndFunction
\end{algorithmic}
\end{algorithm}
\fi

\section{Training Certified $\ell_\infty$ Robust Networks in Natural Representations.}
\label{sec:imperceptibiity}

Building upon our theoretical results from the previous section we now demonstrate that for natural representations, one can indeed achieve better certified robustness to $\ell_\infty$ perturbations by translating guarantees from certified $\ell_2$ robustness. We focus on image data, and study the representation of images in the DCT basis.  
Before we describe the details of our training and certification procedure for $\ell_\infty$ robustness, we provide further empirical evidence that imperceptibility in natural representations such as the DCT basis is a desirable property.

\paragraph{Study of imperceptibility in DCT basis.}
\begin{figure}[h]
    \centering
    \includegraphics[width=0.9\textwidth]{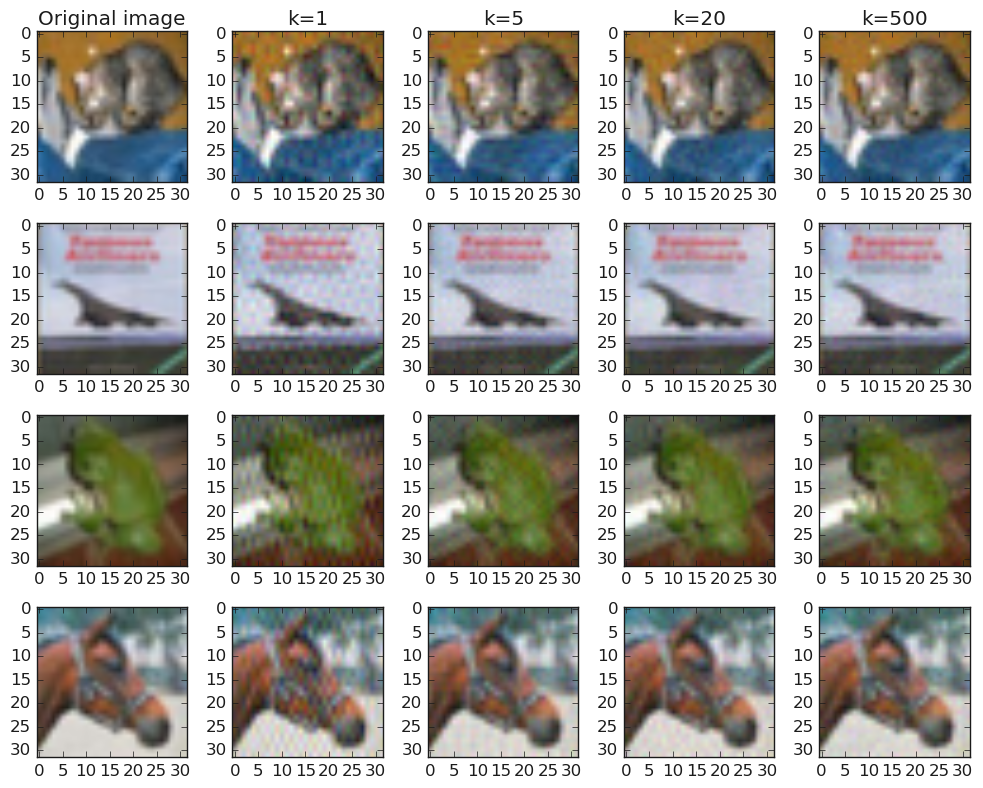}
    \caption{\label{fig:adv_examples_dct} Original images (leftmost) from the CIFAR-10 dataset and their perturbed versions when sparse random perturbations are added in the DCT basis with sparsity $k$. Large perturbations in the DCT basis (e.g., $k=1$ ) lead to perceptible changes in the pixel space though they are $\ell_\infty$ perturbations of $\epsilon \leq 0.09$. As $k$ increases the imperceptibility of the perturbed images improves.}
\end{figure}
We argue that for adversarial perturbations to be imperceptible to humans they should be of small magnitude in the DCT representation~(perhaps in addition to being small in the pixel basis). 
We take images from the CIFAR-10 dataset and transform them into the DCT basis. We then add sparse random perturbations to them. In particular, for a sparsity parameter $k$, we pick $k$ coordinates in the DCT basis at random and add a random perturbation with $\ell_\infty$ norm of $c_k/\sqrt{k}$ where $c_k \approx \eps \sqrt{n}$ is chosen such that the perturbed images are $\epsilon \leq 0.09$ away from the unperturbed images in the pixel space. Notice that for small values of $k$, a perturbation of large $\ell_\infty$ norm is added. Figure~\ref{fig:adv_examples_dct} visualizes the perturbed images for different values of $k$. As seen, large perturbations in the DCT basis lead to visually perceptible changes, even if they are ($\epsilon \leq 0.09$-)close in the pixel basis. For comparison we also include in Appendix~\ref{sec:app-imperceptibility} imperceptible adversarial examples for these images that were generated via the PGD based method of \cite{madry2017towards} on a ResNet-32 network trained on the CIFAR-10 dataset  for robustness to $\ell_\infty$ perturbations of magnitude $\epsilon = 0.09$. 
This further motivates studying robustness in natural data representations. 


\paragraph{Training certified $\ell_\infty$ robust networks in the DCT basis.}

The methods developed in Section~\ref{sec:linf:methods} and ~\ref{sec:l2} together give algorithms for training classifiers with certified $\ell_\infty$ robustness. However while we want $\ell_\infty$ robustness in a different representation $\calU$ (e.g., in the DCT basis), it may still be more convenient to use off-the-shelf methods for performing the training in the original representation $\calX \subseteq \R^n$ (e.g., pixel representation).  
Let the orthogonal matrix $O \in \R^{n \times n}$ represent the DCT transformation. Consider an input $x \in \calX$ and let $u=Ox \in \calU$ be its DCT representation, where $\calU$ is the space of images in the DCT basis. It is easy to see that functions $f:\calX \to \calY$ and $g:\calU \to \calY$ given by $g(u):=f(O^{-1}u)=f(x)$ have the same certified accuracy to $\ell_2$ perturbations.
Moreover 
if the classifier $g$ satisfies $ g(u)=g(\tPi u)$ for some projection $\tPi$, the robust accuracy of $g$ to $\ell_\infty$ perturbations in the representation $\calU$ satisfies (see Proposition~\ref{prop:rotation} in appendix) 
\begin{equation} 
\label{eq:rotationandprojection}
    \accu_{\epsilon'}^{(\ell_\infty)}(g) \ge \accu_{\epsilon}^{(\ell_2)}(f), \text{  for any }\epsilon>0, 0 \le \epsilon' \le \epsilon/\norm{\tPi}_{\infty \to 2}.
\end{equation}

\begin{figure}[h]
\vspace{-20pt}

\centering
\subfloat{\includegraphics[width=6.2cm]{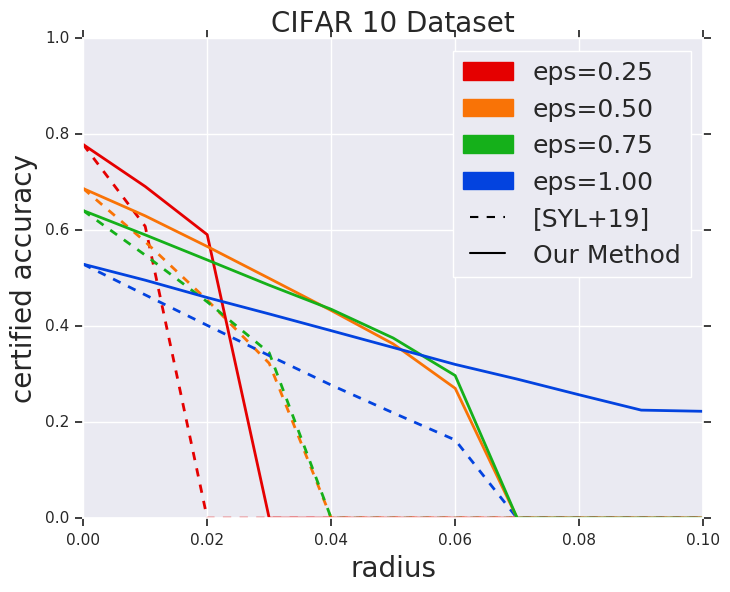}}\hfil
\subfloat{\includegraphics[width=6.2cm]{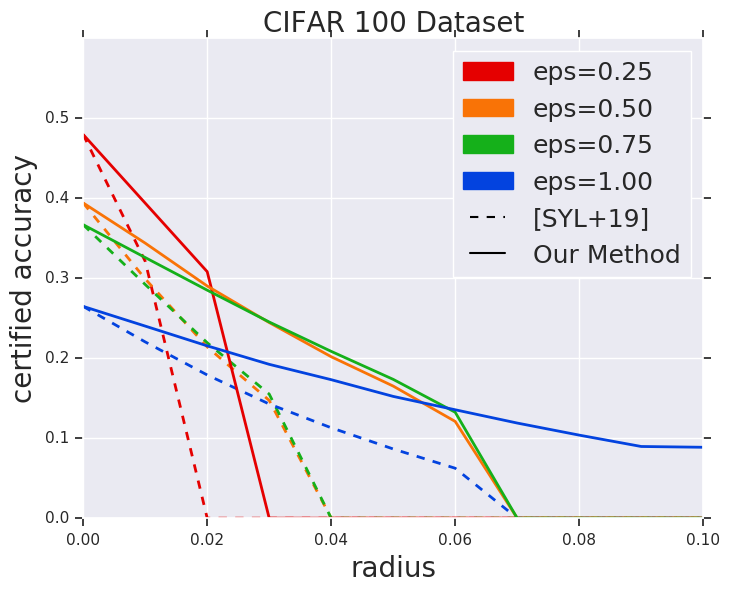}}\hfil
\caption{\label{fig:linf} A comparison of certified $\ell_\infty$ accuracy (y-axis) in the DCT basis 
of our method to that of~\cite{salman2019provably}, for different values of $\epsilon$ and with different certified radii on the x-axis, for $\lambda=0.5$. The left and right plots describe results for the CIFAR-10 and the CIFAR-100 datasets respectively. }

\vskip -.15in
\end{figure}


\paragraph{Experimental Data.} We evaluate our approach on the CIFAR-10 and CIFAR-100 datasets. From \eqref{eq:rotationandprojection}, it is sufficient to train a classifier in the original pixel space with an appropriate projection $\tPi'=O\Pi$.  Hence, we train a smoothed classifier as defined in \eqref{eq:smoothed-classifier} using Algorithm~\ref{algo:l2-training}. 
To obtain the required $\tPi$, we first use the sparse PCA based heuristic (Algorithm~\ref{algo:spca}) to find a projection matrix of rank $200$ for the three image channels separately. We then use Algorithm~\ref{algo:sdp} to compute upper bounds on the ${\infty \to 2}$ operator norm of the projections matrices. Finally, we combine the obtained projection matrices from each channel to obtain a projection $\tPi$. Table~\ref{tbl:sparse-pca} shows the values of the operator norms certified by our algorithm for each image channel and for the combined projection matrix. Notice the obtained subspaces have operator norm values significantly smaller than $\sqrt{n}=55.42$. The reconstruction error in each case, when projected onto $\tPi$ is at most $0.0345$.



\begin{wraptable}{r}{0pt}

\begin{tabular}{ |c|c|c|c|c| }
\hline
Dataset & R & G & B & $\tPi$\\
\hline
CIFAR-10 & $17.45$ & $17.51$ & $17.39$ & $30.22$\\
\hline
CIFAR-100 & $17.22$ & $17.33$ & $17.37$ & $29.97$\\
\hline
\end{tabular}
 \caption{\label{tbl:sparse-pca}The table shows bounds on $\infty \to 2$ norm for projection matrices obtained by Algorithm~\ref{algo:sdp} on CIFAR-10 and CIFAR-100 training sets.}

\vspace{-12pt}
\end{wraptable}

 

 After training, on an input $x$ we obtain a certified radius for $\ell_\infty$ perturbations in the DCT basis by obtaining a certified $\ell_2$ radius of $\epsilon$ via randomized smoothing and then dividing the obtained value by $\|\tPi\|_{\infty \to 2}$. We compare with the approach of~\cite{salman2019provably} for training a smoothed classifier without projections. Since the classifier of~\cite{salman2019provably} does not involve projections, we translate the resulting $\ell_2$ robustness guarantee into an $\ell_\infty$ guarantee by dividing with $\sqrt{n} = 55.42$ as is done in~\cite{salman2019provably}.
 Figure~\ref{fig:linf} shows that across a range of training parameters, our proposed approach via robust projections leads to significantly higher certified accuracy to $\ell_\infty$ perturbations in the DCT basis.



\section{Related Work}
\label{app:related}
Several recent works have aimed to design algorithms that are robust to adversarial perturbations. This is a rapidly growing research area, and here we survey the works most relevant in the context of the current paper. 

In practice, first order methods are a popular choice for adversarial training.
Algorithms such as the {\em fast gradient sign method} (FGSM)~\cite{goodfellow2014explaining} and {\em projected gradient descent} (PGD)~\cite{madry2017towards} fall into this category. These methods are appealing since they are generally applicable to many network architecture and only rely on black box access to gradient information. Recent works have aimed to improve upon the scalability and runtime efficiency of the above approaches~\cite{zhang2019theoretically, shafahi2019adversarial, wong2020fast}.
While these methods can be used to gather evidences regarding the robustness of a classifier to first order attacks, i.e., those based on gradient information, they do not exclude the possibility of a stronger attack that can significantly degrade their performance.

There have been recent works studying the problem of efficiently certifying the adversarial robustness of classifiers from both empirical and theoretical perspectives~\cite{croce2018provable, wang1811mixtrain, wang2018efficient, gehr2018ai2, mirman2018differentiable,singh2018fast,weng2018towards,zhang2018efficient, awasthi2019robustness}. For the case of certifying robustness to $\ell_\infty$ perturbations, recent works have explored linear programming~(LP) and semi-definite programming~(SDP) based approaches to produce lower bounds on the certified robustness of neural networks. Such approaches have seen limited success beyond shallow (depth at most 4) networks~\cite{raghunathan2018semidefinite,wong2018provable}. These works use convex programming (including SDPs) to directly provide an upper bound on the objective that corresponds to finding an adversarial perturbation for a given classifier $f$ on input $x$. While our multiplicative weights method also approximately solves a convex program (specifically, an SDP), we use the convex program to translate certified $\ell_2$ robustness guarantees to certified $\ell_\infty$ guarantees in a black box manner. 

Another line of work has focused on methods for certifying robustness via propagating interval bounds~\cite{gowal2018effectiveness}. While these methods have seen recent success in scaling up to large networks, they typically cannot be applied in a black box manner and require access to the structure of the underlying network.

Randomized smoothing proposed in the works of~\cite{liu2018towards,lecuyer2019certified, cohen2019certified, cao2017mitigating} is a simple and effective way to certify robustness of neural networks to $\ell_2$ perturbations. These methods work by creating a smoothed classifier by adding Gaussian noise to inputs. The improved Lipschitzness of the smoothed classifier can then be translated to bounds on certified $\ell_2$ robustness. Other recent works have also proposed developing Lipschitz classifiers in order to get improved robustness~\cite{yang2020adversarial}.

The recent work of~\cite{salman2019provably}, that we build upon, shows that by incorporating the smoothed classifier into the 
training process, one can get state-of-the art results for 
certified $\ell_2$ robustness. The recent works of~\cite{li2019certified, dvijotham2018training, yang2020randomized} extends the work of~\cite{salman2019provably} by developing smoothing based methods for other $\ell_p$ norms and noise distribution other than the Gaussian distribution. The authors in~\cite{salman2020black} explore smoothing based methods for making a pretrained classifier certifiably robust.

While smoothing based methods have been successful for providing robustness to $\ell_2$ perturbations, they have not demonstrated the same benefits for for the more challenging setting of robustness to $\ell_\infty$ perturbations. The recent works of~\cite{blum2020random, yang2020randomized, kumar2020curse} have provided lower bounds demonstrating that smoothing based methods are unlikely to yield benefits for robustness to $\ell_\infty$ perturbations in the worst case. There have also been recent works exploiting low rank representations to achieve better adversarial robustness~\cite{yang2019me} empirically against first order attacks such as the PGD method~\cite{madry2017towards}. The goal in our work however, is to use low rank representations to achieve {\em certified} accuracy guarantees. 

{\em Fast approximate SDP algorithms.} Our certification algorithm is based on the multiplicative weights update (MWU) framework applied to the problem of Quadratic Programming. Klein and Lu~\cite{kleinLu} gave the first MWU-based algorithm for solving particular semidefinite programs (SDP). They used the framework of \cite{plotkinST} for solving convex programs to give a faster algorithm for solving the maxcut SDP within a $1+\delta$ approximation using $\tilde{O}(n)$ eigenvalue/eigenvector computations, where the $\tilde{O}$ hides polylogarithmic factors in $n$ and polynomial factors in $1/\delta$.  But the analysis of Klein and Lu ~\cite{kleinLu} is specialized for the Max-Cut problem, which is a special case of \eqref{eq:operator:problem} where $M$ is a graph Laplacian; it does not directly extend to our more general setting (they use diagonal dominance of a Laplacian to get a small bound on the width). Arora, Hazan and Kale~\cite{AroraHK, Kalethesis} introduced a more general framework for solving semidefinite programs. In particular, they showed how when this framework is applied to the problem of Quadratic Programming, it gives a running time bound of $O(\frac{n^{1.5}}{\delta^{2.5}} \cdot \min\set{m,n^{1.5}/(\delta \alpha^*)})$, where the optimal solution value is $\sdpval=\alpha^* \norm{M}_1$, and $m$ is the number of non-zeros of matrix $M$. For SDPs of the form \eqref{eq:primal}, the width parameter could be reasonably large and could result in $\tilde{\Omega}(n^{3/2})$ iterations (see Section 6.3 of \cite{Kalethesis}). To the best of our knowledge this represents the prior best running time for approximately solving the Quadratic Programming SDP~\eqref{eq:QP:problem}, and is most related to our work. 

Arora et al.~\cite{AroraK07} also gave primal-dual based algorithms near-linear time algorithms for several combinatorial problems like max-cut and sparsest cut that use semidefinite programming relaxations. Iyengar et al.~\cite{iyengarPS11, jainY11,ALO16} consider positive covering and packing SDPs like max-cut, and give fast approximate algorithms based on the multiplicative weights method. In particular, \cite{jainY11, ALO16, swatiSTOC2020} and several other works give parallel algorithms, where the iteration count only has a mild polylogarithmic dependence on width for solving these positive SDPs. This uses the matrix multiplicative weights method that involves computing matrix exponentials. Quadratic Programming does not fall into this class of problems in general, unless $M \succeq 0$. A very recent paper of Lee and Padmanabhan~\cite{SY19} gives algorithms that work for problems like quadratic programming with diagonal constraints; however this gives an additive approximation to the objective which does not suffice for our purposes. Moreover, our algorithm is very simple and practical, and each iteration only uses a single computation of the largest eigenvalue. Hence, analyzing this algorithm is interesting in its own right.  Finally, interior point methods, cutting plane methods and the Ellipsoid algorithm find solutions to the SDP that have much better dependence on the accuracy $\delta$ e.g., a polynomial in $\log(1/\delta)$ dependence, at the expense of significantly higher dependence on $n$~\cite{Alizadeh, LSW15}. 

\vspace{-10pt}
\section{Conclusion}

In this paper, we have shown significant benefits in leveraging natural structure that exists in real-world data e.g., low-rank or sparse representations, for obtaining certified robustness guarantees under both $\ell_2$ perturbations and $\ell_\infty$ perturbations in natural data representations. Our experiments involving imperceptibility in the DCT basis for images suggest that further study of $\ell_\infty$ robustness for other natural basis (apart from the co-ordinate basis) would be useful for different data domains like images, audio etc. We also gave faster algorithms for approximately solving semi-definite programs for quadratic programming (with provable guarantees that improve the state-of-the-art), to obtain certified $\ell_\infty$ robustness guarantees. Such problem-specific fast approximate algorithms for powerful algorithmic techniques like SDPs and other convex relaxations may lead to more scalable certification procedures with better guarantees. Finally it would be interesting to see if our ideas and techniques involving the $\infty \to 2$ operator norm can be adapted into the training phase, in order to achieve better certified $\ell_\infty$ robustness in any desired basis without compromising much on natural accuracy.

\ifNeurIPS
\section*{Broader Impact}
Our work provides efficient algorithms for training neural networks with certified robustness guarantees. This can have significant positive societal impact considering the importance of protecting AI systems against malicious adversaries. A classifier with certified robustness guarantees can give a sense of security to the end user. On the other hand, our methods achieve robustness at the expense of a small loss in natural test accuracy as compared to non-adversarial training. It is unclear how this loss in accuracy is distributed across the population. This could have a negative societal impact if the loss in accuracy is disproportionately on data points/individuals belonging to a specific demographic group based on say race or gender. That said, robustness to perturbations also corresponds to a natural notion of individual fairness since data points with similar features need to be treated similarly by a robust classifier. Hence, a careful study must be done to understand these effects before a large scale practical deployment of systems based on our work.
\fi

\bibliographystyle{alpha}
\bibliography{advrobustness}

\newpage
\appendix


\section{Additional experiments for certified $\ell_2$ robustness}
\label{sec:app-l2}

We first discuss the setting of the hyperparameters in our experiments. The experiments in Figure~\ref{fig:acc_tradeoff_cifar10}, Figure~\ref{fig:l2}, and Figure~\ref{fig:linf} were obtained by training a ResNet-32 architecture~\cite{he2016deep}. In each case we optimize the objective in \eqref{eq:pgd-objective} with a starting learning rate of $0.1$ and decaying the learning rate by a factor of $0.1$ every $50$ epochs. Each model was trained for $150$ epochs. We follow the methodology of~\cite{salman2019provably} and approximate the soft classifier in \eqref{eq:smoothed-projected-classifier-v2} by drawing $4$ noise vectors from the Gaussian distribution $N(0,\sigma^2 I)$ and optimizing the inner maximization in \eqref{eq:pgd-objective} via $10$ steps of projected gradient descent. When training the method of~\cite{salman2019provably} we choose the value of $\sigma$ to be $0.12$ for $\epsilon = 0.25, 0.5$. For $\epsilon = 0.75$ and $\epsilon=1.0$, we choose $\sigma$ to be $0.25$ and $0.5$ respectively. When training our procedure in Algorithm~\ref{algo:l2-training} we use a higher noise of $\sigma \cdot \lambda \sqrt{d/r}$, where $\sigma$ is the corresponding noise used for the method of \cite{salman2019provably}, $d=1024$ and $r=200$. We vary the parameter $\lambda$ as discussed in Section~\ref{sec:l2}. Empirically, we find that $\lambda=0.5$ gives the best results.

We first demonstrate that real data such as images have a natural low rank structure. As an illustration, Figure \ref{fig:pca_error_vs_d} shows the relative reconstruction error for the CIFAR-10 and CIFAR-100 \cite{krizhevsky2009learning} datasets, when each of the three channels is projected onto subspaces of varying dimensions, computed via principal component analysis (PCA). As can be seen, even when projected onto $200$ dimensions, the reconstruction error remains small. 
\pnote{Use a single plot for Fig 1.}
\begin{figure}[H]
    \centering
    \begin{minipage}{0.8\textwidth}
        \centering
        \includegraphics[width=1\textwidth]{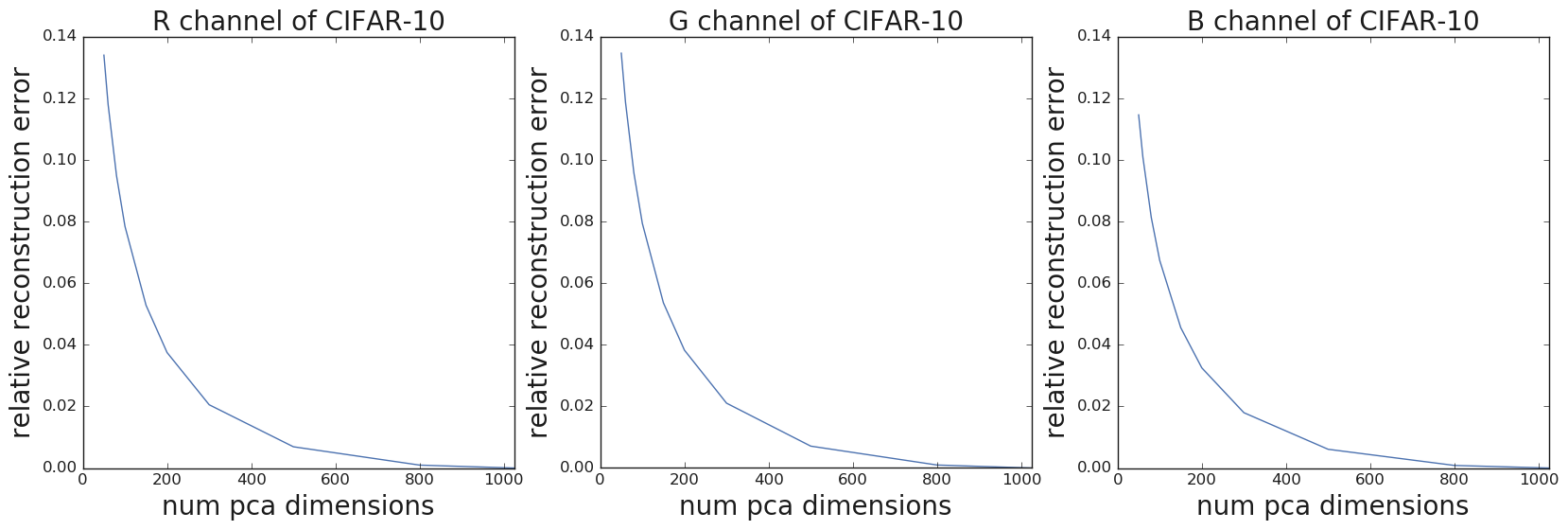} 
    \end{minipage}
    \begin{minipage}{0.8\textwidth}
        \centering
        \includegraphics[width=1\textwidth]{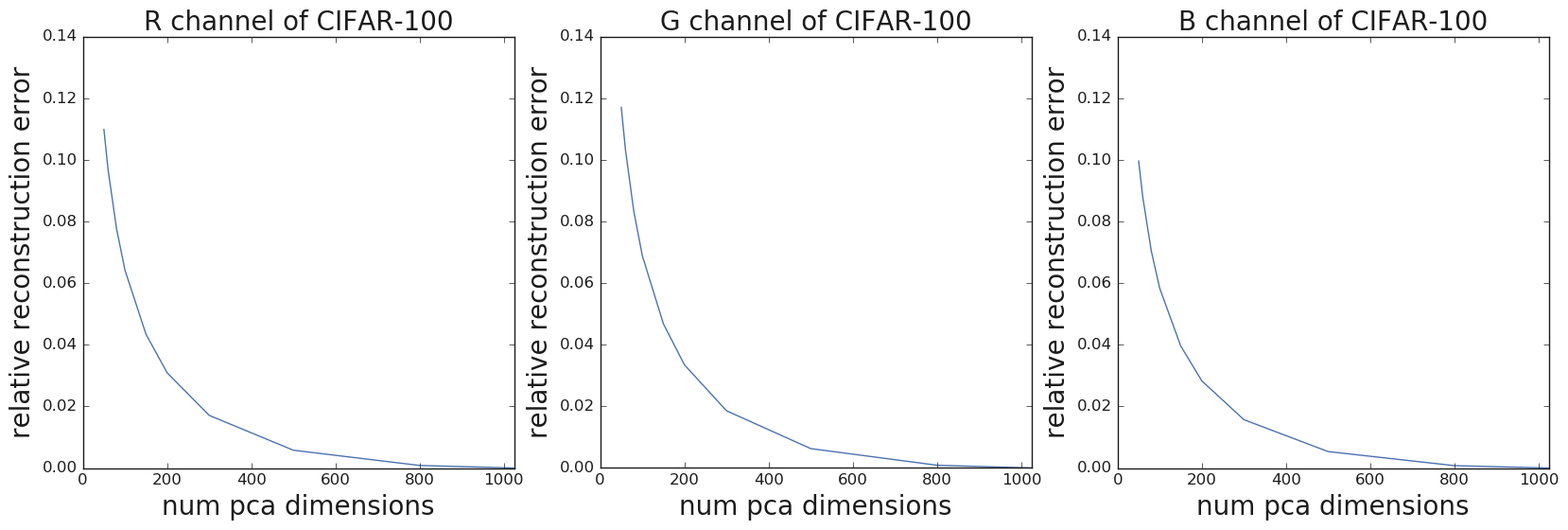} 
    \end{minipage} 
   \caption{\label{fig:pca_error_vs_d}} 
\end{figure}

Next we provide in Figure~\ref{fig:l2-app} a more fine grained comparison between our procedure in Algorithm~\ref{algo:l2-training} and the method of~\cite{salman2019provably} by separately comparing the performance of the two methods for different values of $\epsilon$ on the CIFAR-10 and CIFAR-100 datasets.
\begin{figure}[H]
\centering
\subfloat{\includegraphics[width=6cm]{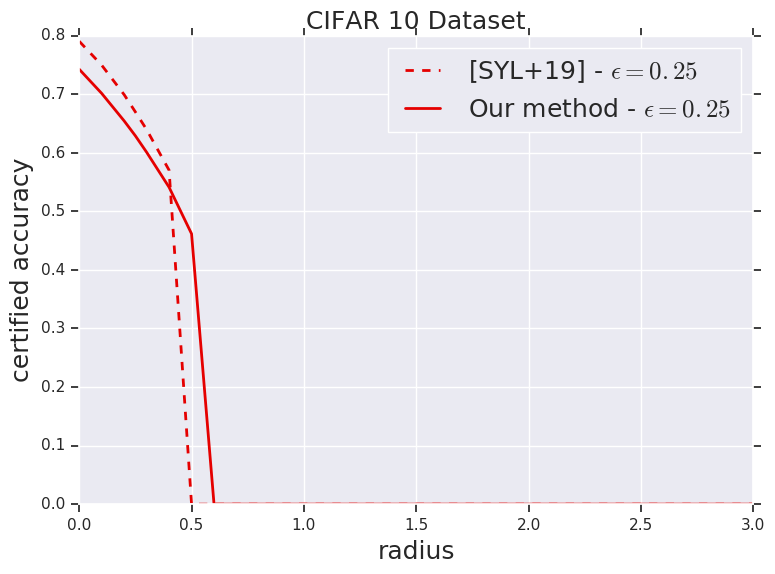}}\hfil
\subfloat{\includegraphics[width=6cm]{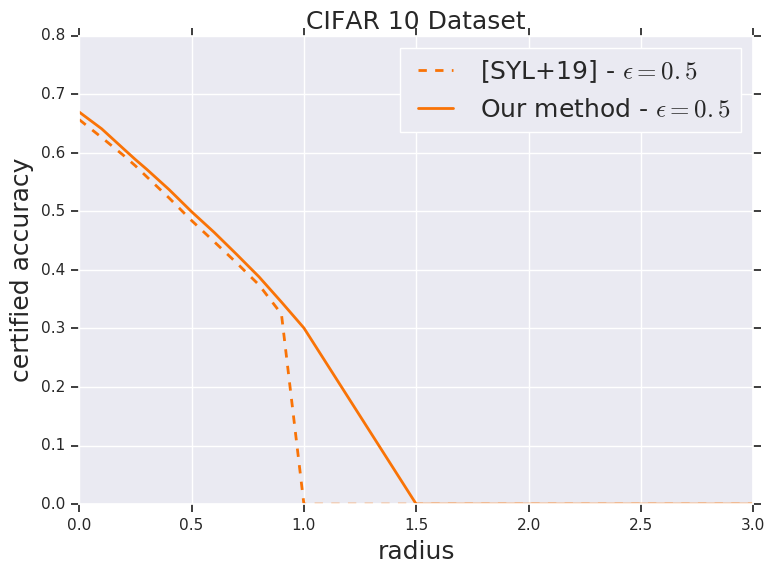}}\hfil
\subfloat{\includegraphics[width=6cm]{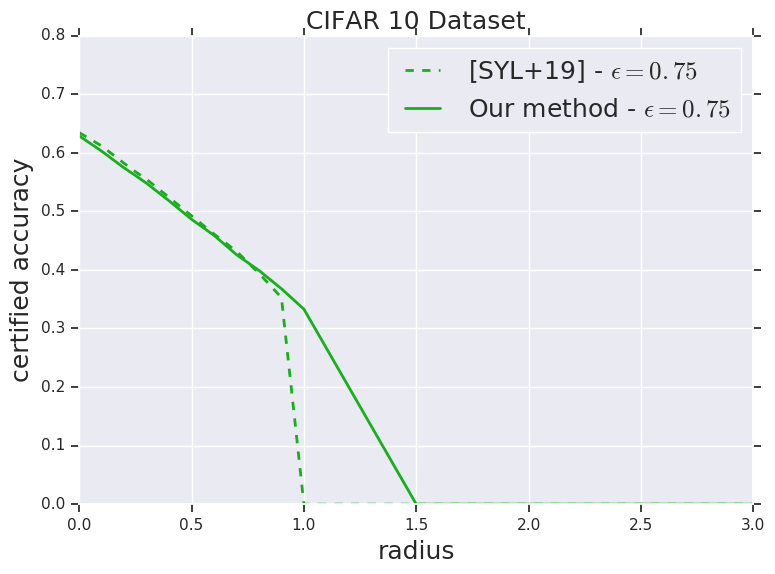}}\hfil
\subfloat{\includegraphics[width=6cm]{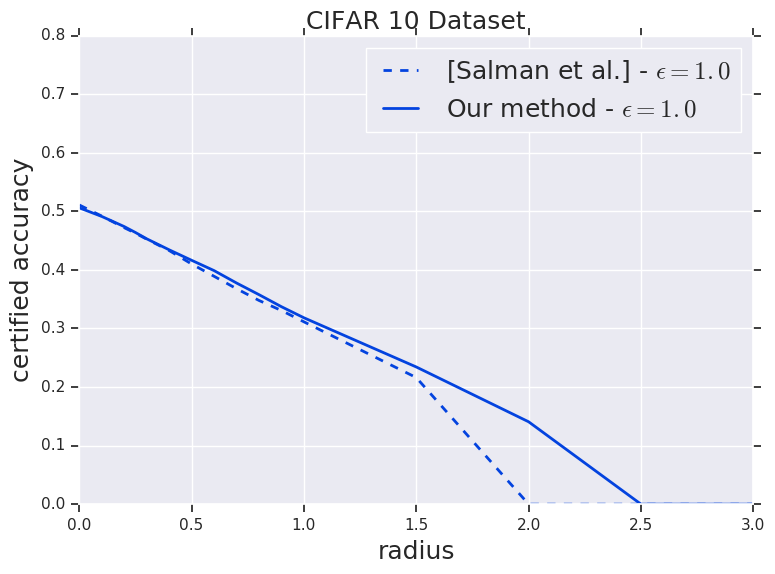}}

\subfloat{\includegraphics[width=6cm]{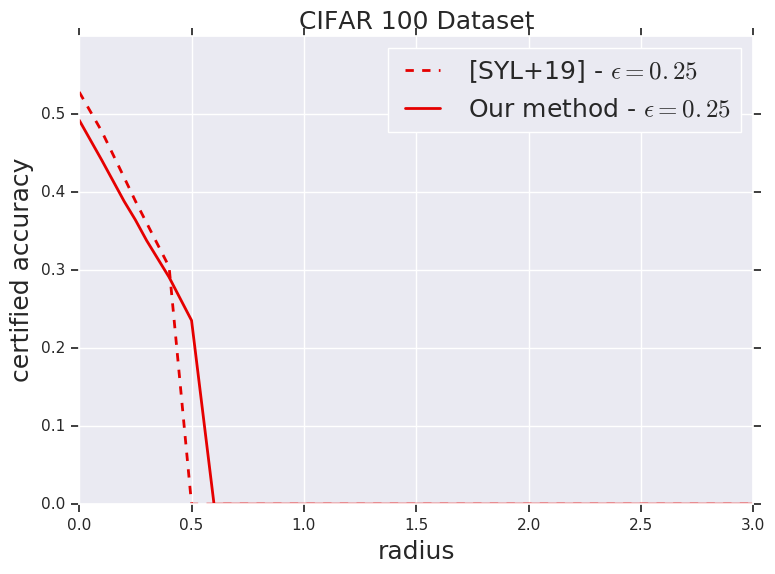}}\hfil
\subfloat{\includegraphics[width=6cm]{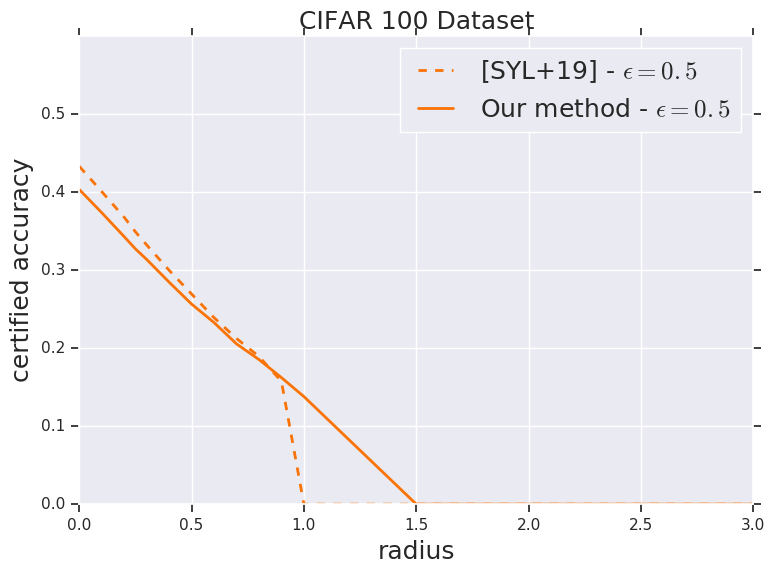}}\hfil
\subfloat{\includegraphics[width=6cm]{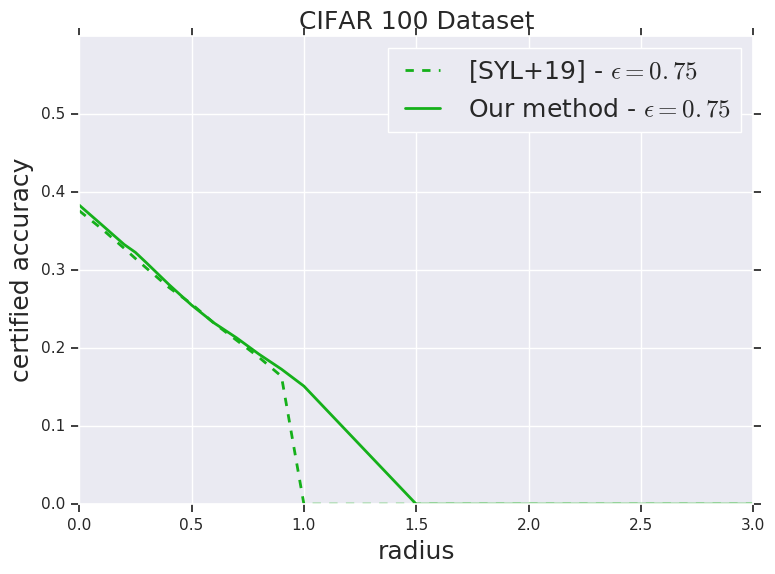}}\hfil
\subfloat{\includegraphics[width=6cm]{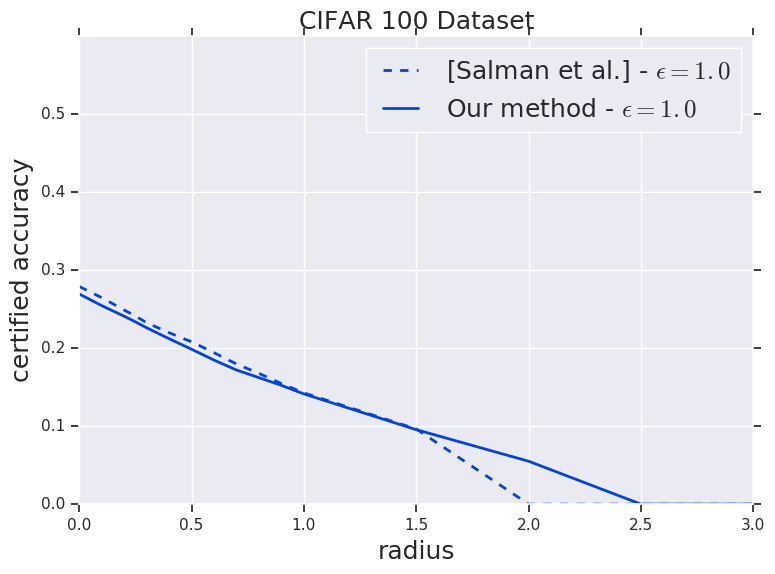}}

\caption{\label{fig:l2-app} A comparison of certified radius guarantees obtained via Algorithm~\ref{algo:l2-training} as compared to the approach of~\cite{salman2019provably}. The x-axis is the radius, and the y-axis represents the certified accuracy. }
\end{figure}

\section{Simple Theoretical Propositions and Proofs} \label{sec:simpleclaims}
The following proposition shows the equivalence between the smoothed classifiers in \eqref{eq:smoothed-projected-classifier} and \eqref{eq:smoothed-projected-classifier-v2}.

\begin{proposition}
\label{prop:l2-equivalence-app}
Given a base classifier $f: \R^n \to \mathcal{Y}$ and a projection matrix $\Pi$, on any input $x$, the smoothed classifier $g_\Pi(x)$ as defined in \eqref{eq:smoothed-projected-classifier} is equivalent to the classifier given by
\begin{align*}
    \tilde{g}_\Pi(x) = \arg \max_{y \in \mathcal{Y}} \mathbb{P}(f \big(\Pi (x+\delta) \big) = y).
    \end{align*}
    Here $\delta$ is a standard Gaussian of variance $\sigma^2$ in every direction. 
\end{proposition}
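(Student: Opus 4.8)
The plan is to show that the two smoothed classifiers induce \emph{identical} label probabilities on every input, from which equality of their $\arg\max$ operations is immediate. The only tool needed is the behaviour of an isotropic Gaussian under an orthogonal projection, i.e.\ the rotational symmetry of $N(0,\sigma^2 I)$.

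First I would expand the argument of $f$ in the definition of $\tilde{g}_\Pi$ using linearity of $\Pi$: namely $\Pi(x+\delta) = \Pi x + \Pi \delta$. Thus the sole difference between the two classifiers is the noise term appearing inside $f$ --- it is $\delta_\Pi$ (a standard Gaussian of variance $\sigma^2$ supported within the range of $\Pi$) in $g_\Pi$, versus $\Pi\delta$ in $\tilde{g}_\Pi$. So it suffices to prove that these two random vectors are identically distributed.

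Next I would identify the law of $\Pi\delta$. Writing the rank-$r$ orthogonal projection as $\Pi = U U^\top$ with $U \in \R^{n \times r}$ having orthonormal columns spanning the subspace, we have $\Pi\delta = U(U^\top \delta)$. Since $\delta \sim N(0,\sigma^2 I_n)$ and $U^\top U = I_r$, the vector $U^\top \delta \sim N(0,\sigma^2 I_r)$ is an isotropic Gaussian of variance $\sigma^2$ in the $r$-dimensional coordinate system given by $U$; embedding it back through $U$ yields exactly a standard Gaussian of variance $\sigma^2$ supported on the range of $\Pi$, which is precisely the definition of $\delta_\Pi$. Equivalently, one checks $\Pi\delta \sim N(0,\sigma^2 \Pi\Pi^\top) = N(0,\sigma^2 \Pi)$ using $\Pi\Pi^\top = \Pi$, so $\Pi\delta$ and $\delta_\Pi$ share the same covariance and the same support.

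Finally, because $\Pi x$ is deterministic and the noise terms are identically distributed, for every label $y \in \mathcal{Y}$ we obtain $\mathbb{P}(f(\Pi(x+\delta)) = y) = \mathbb{P}(f(\Pi x + \delta_\Pi) = y)$, whence the maximizing labels coincide and $g_\Pi(x) = \tilde{g}_\Pi(x)$. I do not anticipate a genuine obstacle; the only point requiring care is making the informal phrase ``a standard Gaussian of variance $\sigma^2$ that lies within $\Pi$'' precise enough to match $\Pi\delta$, i.e.\ confirming via $\Pi = \Pi^\top = \Pi^2$ that the covariance of $\Pi\delta$ is $\sigma^2\Pi$ and its support is exactly the range of $\Pi$.
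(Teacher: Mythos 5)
Your proposal is correct and follows essentially the same route as the paper's proof: both reduce the claim to showing that $\Pi\delta$ and $\delta_\Pi$ are identically distributed via the rotational symmetry of the isotropic Gaussian under orthogonal projection. Your explicit factorization $\Pi = UU^\top$ and the covariance computation $\sigma^2\Pi\Pi^\top = \sigma^2\Pi$ simply make precise the step the paper states informally as "projections of a spherical Gaussian are themselves Gaussian within the subspace."
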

\begin{proof}
Let $\Pi$ be a projection matrix with the corresponding subspace denoted by $\mathcal{S}$.
Let $\delta$ be a random variable distributed as $N(0,\sigma^2 I)$ and $\delta_\Pi$ be a standard normal random variable with variance $\sigma^2$ within $\mathcal{S}$ and variance $0$ outside. From the property of Gaussian distributions we have that projections of a spherical Gaussian random variable with variance $\sigma^2$ in each direction are themselves Gaussian random variables with variance $\sigma^2$ in each direction within the subspace. Hence, for a fixed input $x$, the random variables $\Pi x+\delta_\Pi$ and $\Pi x+ \Pi \delta$ are identically distributed. From this we conclude that the classifier 
\begin{align*}
    g_\Pi(x) = \arg \max_{y \in \mathcal{Y}} \mathbb{P}(f(\Pi x+\delta_\Pi) = y).
    \end{align*}
    is identical to the classifier $\tilde{g}_\Pi(x)$.
\end{proof}
The following proposition shows how to obtain an $\ell_\infty$ robustness guarantee from an $\ell_2$ robustness guarantee. 
\begin{proposition}\label{prop:linftol2}
Consider any classifier $f: \R^n \to \calY$, and let the classifier $g(x):=f(\Pi x)$, where $\Pi \in \R^{n \times n}$ is any projection matrix. Then if we denote by $r_2(x), r_\infty(x)$ the radius of robustness measured in $\ell_2$ and $\ell_\infty$ norm respectively of $g$ around a point $x \in \R^n$. Then we have 
$$ r_\infty (x) \ge \frac{r_2 (x)}{\norm{\Pi}_{\infty \to 2}} , \text{ where } \norm{\Pi}_{\infty \to 2}= \max_{x: \norm{x}_\infty \le 1} \norm{\Pi x}_2. $$
Hence for any $\eps \geq 0$, we have $\accu^{(\ell_\infty)}_{\eps'}(g) \ge \accu^{(\ell_2)}_\eps(g)$,  for all $0 \le \eps' \le \eps/ \norm{P}_{\infty \to 2}$. 
\end{proposition}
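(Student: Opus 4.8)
The plan is to first establish the pointwise radius inequality $r_\infty(x) \ge r_2(x)/\norm{\Pi}_{\infty \to 2}$, and then obtain the certified-accuracy comparison as an immediate monotonicity consequence. For the radius bound, I would fix a point $x$, set $r := r_2(x)$, and argue straight from the definition of the $\ell_\infty$ certified radius that $g$ is constant on the $\ell_\infty$ ball of radius $r/\norm{\Pi}_{\infty \to 2}$ around $x$. So I take an arbitrary $z$ with $\norm{z}_\infty \le r/\norm{\Pi}_{\infty \to 2}$ and aim to show $g(x+z) = g(x)$. Since $g(x+z) = f(\Pi x + \Pi z)$, the natural move is to compare the given $z$ against the auxiliary perturbation $w := \Pi z$, which lies inside the subspace onto which $\Pi$ projects.

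The crux is to exploit two properties of $\Pi$. First, idempotency: because $\Pi w = \Pi \Pi z = \Pi z$, we get $g(x+w) = f(\Pi x + \Pi w) = f(\Pi x + \Pi z) = g(x+z)$, so $z$ and $w$ produce the same output of $g$. Second, the defining operator-norm inequality $\norm{\Pi z}_2 \le \norm{\Pi}_{\infty \to 2}\,\norm{z}_\infty \le r$ shows that $w$ is an $\ell_2$ perturbation of $x$ of norm at most $r = r_2(x)$. By the definition of the $\ell_2$ certified radius we then have $g(x+w) = g(x)$, and chaining the two equalities yields $g(x+z) = g(x)$. Since $z$ was arbitrary, this establishes the radius inequality.

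For the accuracy claim I would observe that, for any data point $(x,y)$, the event witnessing $\accu^{(\ell_2)}_\eps(g)$ --- namely $g(x)=y$ together with $r_2(x) \ge \eps$ --- implies via the radius inequality that $r_\infty(x) \ge \eps/\norm{\Pi}_{\infty\to 2} \ge \eps'$ whenever $\eps' \le \eps/\norm{\Pi}_{\infty\to 2}$, and hence witnesses $\accu^{(\ell_\infty)}_{\eps'}(g)$. Taking probabilities over $(x,y) \sim D$ gives $\accu^{(\ell_\infty)}_{\eps'}(g) \ge \accu^{(\ell_2)}_\eps(g)$. I expect the only genuine subtlety to be recognizing that idempotency of the projection lets one replace the $\ell_\infty$-small perturbation $z$ by the in-subspace, $\ell_2$-controlled perturbation $\Pi z$ without changing $g$'s output; everything else is a direct unrolling of the definitions together with the operator-norm inequality. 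One should also keep in mind that $\Pi$ is an orthogonal projection, so that $\Pi z$ genuinely lies in the relevant subspace and the bound $\norm{\Pi z}_2 \le \norm{\Pi}_{\infty\to 2}\,\norm{z}_\infty$ applies exactly as stated.
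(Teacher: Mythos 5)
Your proof is correct and is essentially the paper's argument: both hinge on replacing the $\ell_\infty$-small perturbation $z$ by $\Pi z$ via idempotency $\Pi^2 = \Pi$ and bounding $\norm{\Pi z}_2 \le \norm{\Pi}_{\infty \to 2}\norm{z}_\infty$; the paper merely phrases this contrapositively (an adversarial $\ell_\infty$ perturbation yields an adversarial $\ell_2$ perturbation) while you argue directly. Note that only idempotency is needed, so your closing caveat about $\Pi$ being an orthogonal projection is unnecessary, though harmless.
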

\begin{proof}
Consider any perturbation $x'=x+z$ where $\norm{z}_\infty \le \eps'$ such that the predictions given by $g(x')=f(\Pi x')$
and $g(x)=f(\Pi x)$ differs. 
Consider another perturbation $z'$ such that $z' = \Pi z$. Notice that $g(x+z') = f(\Pi(x + \Pi z)) = g(x')$, since $\Pi^2 = \Pi$ for a projection matrix. Hence, the predictions of the network differ on $x$ and $x+z'$. Moreover $\norm{z'}_2 \le \norm{\Pi}_{\infty \to 2} \eps'$. Hence $g$ is not robust at $x$ up to an $\ell_2$ radius of $\eps= \eps' \norm{\Pi}_{\infty \to 2}$, as required.
\end{proof}

The following simple proposition shows how to obtain certified robustness guarantees in the representation $\calU$ e.g., DCT basis by performing appropriate training in another representation $\calX$ e.g., the co-ordinate or pixel basis. Let $O \in \R^{n \times n}$ represent an orthogonal matrix that represents the DCT transformation i.e., for an input $x \in \calX \subseteq \R^n$ let $\psi(x)=Ox$ represents its DCT representation (hence $\calU = \psi(\calX)$). 

\begin{proposition}\label{prop:rotation}
Suppose $\calX, \calU=\psi(\calX)$ be defined as above. 
For any classifier $g: \calU \to \calY$, consider the classifier $f: \calX \to \calY$ obtained as $f=g(Ox)$ where $O$ is an orthogonal matrix. 
For a point $x \in \calX$ if we denote by $r^{(\ell_2)}_f(x)$ the radius of robustness of $f$ at $x$ measured in $\ell_2$ norm, then we have that $r^{(\ell_2)}_g(O x ) = r^{(\ell_2)}_f(x)$. 
Moreover suppose the classifier $g(u)=g(\Pi u)~\forall u \in \calU$, for some projection matrix $\Pi$, then the robust accuracy of $g$ to $\ell_\infty$ perturbations in the representation $\calU$ satisfies $\accu_{\eps'}^{(\ell_\infty)}(g) \ge \accu_{\eps}^{(\ell_2)}(f) $ for any $\eps>0$ and $\eps'\le \eps/\norm{\Pi}_{\infty \to 2}$. 
\end{proposition}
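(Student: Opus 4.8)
The plan is to establish the two assertions in order, letting the first (an exact identity between the $\ell_2$ robustness radii of $f$ and $g$) feed into the second, which will then follow from a direct application of Proposition~\ref{prop:linftol2}. Throughout I view both classifiers as functions on all of $\R^n$ via the identity $f=g\circ O$, so that perturbations are unconstrained and the orthogonal change of variables is clean.

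First I would prove the radius identity using only that $O$ is orthogonal. Setting $u=Ox$, for any perturbation $z\in\R^n$ we have $f(x+z)=g(O(x+z))=g(u+Oz)$, and since $\norm{Oz}_2=\norm{z}_2$ while $z\mapsto Oz$ is a norm-preserving bijection of $\R^n$, the function $g$ is constant on the $\ell_2$-ball of radius $\rho$ around $u$ if and only if $f$ is constant on the $\ell_2$-ball of radius $\rho$ around $x$. Taking the supremum over admissible $\rho$ gives $r^{(\ell_2)}_g(Ox)=r^{(\ell_2)}_f(x)$, and the case $z=0$ gives $g(Ox)=f(x)$. Next I would translate this into an equality of certified accuracies: test inputs for $g$ live in $\calU=\psi(\calX)$ and are distributed as the pushforward of $D$ under $\psi=O$, so substituting $u=Ox$ into the definition \eqref{eq:def-certified-acc} and using $g(Ox)=f(x)$ together with the radius identity, the event $\{g(u)=y,\ r_g(u)\ge\eps\}$ corresponds exactly to $\{f(x)=y,\ r_f(x)\ge\eps\}$ under this change of variables, whence $\accu^{(\ell_2)}_{\eps}(g)=\accu^{(\ell_2)}_{\eps}(f)$.

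For the $\ell_\infty$ statement I would invoke Proposition~\ref{prop:linftol2}. The hypothesis $g(u)=g(\Pi u)$ says precisely that $g$ is its own projected classifier: taking the base classifier in Proposition~\ref{prop:linftol2} to be $g$ itself, the projected classifier $g\circ\Pi$ equals $g$, so Proposition~\ref{prop:linftol2} applies verbatim and yields $\accu^{(\ell_\infty)}_{\eps'}(g)\ge\accu^{(\ell_2)}_{\eps}(g)$ for all $0\le\eps'\le\eps/\norm{\Pi}_{\infty\to2}$. Chaining this with the equality $\accu^{(\ell_2)}_{\eps}(g)=\accu^{(\ell_2)}_{\eps}(f)$ from the previous paragraph gives $\accu^{(\ell_\infty)}_{\eps'}(g)\ge\accu^{(\ell_2)}_{\eps}(f)$, as required.

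The two radius/accuracy steps are essentially routine, being a single orthogonal change of variables. The only points needing care are conceptual rather than computational: the mild re-reading in the last step, where Proposition~\ref{prop:linftol2} is stated for an arbitrary base classifier composed with $\Pi$ and here the base is taken to be $g$ itself through the identity $g=g\circ\Pi$; and the distributional bookkeeping, where one must measure the accuracy of $g$ against the pushforward of $D$ under $O$ for the change of variables to line up. Neither constitutes a genuine obstacle, so I expect the argument to go through without difficulty.
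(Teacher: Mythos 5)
Your proof is correct and follows essentially the same route as the paper's: an orthogonal change of variables to identify the $\ell_2$ robustness radii of $f$ and $g$ (the paper phrases this as mapping adversarial examples back and forth via $O^{-1}$), followed by a direct application of Proposition~\ref{prop:linftol2} with the base classifier taken to be $g$ itself via $g=g\circ\Pi$. If anything, you are more explicit than the paper about the distributional bookkeeping --- measuring the accuracy of $g$ against the pushforward of $D$ under $O$ so that the two certified accuracies line up --- which the paper's one-line conclusion leaves implicit.
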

\begin{proof}
We will prove by contradiction. Let $u, u' \in \psi(\calX)$ and let $u'=u+\delta_u$ be an adversarial perturbation of $u$ with $\norm{\delta_u}_2 = r$. Let $x=O^{-1} u$, and let $x'=O^{-1} u'$; note that $x, x'$ exist since $u, u' \in \psi(\calX)$. Moreover $\norm{x-x'}_2 = \norm{u-u'}_2$ since $O$ is a rotation (orthogonal matrix). Hence $x'$ is an adversarial example for $f$ at $x$, at a $\ell_2$ distance of $\delta_u$. This shows that $r^{(\ell_2)}_g(O x ) \ge r^{(\ell_2)}_f(x)$. A similar argument also shows that $r^{(\ell_2)}_g(O x ) \le r^{(\ell_2)}_f(x)$. The last part of the proposition follows by applying Proposition~\ref{prop:linftol2}.
\end{proof}

The following simple fact relates the $\infty \to 2$ operator norm and the $\ell_1$ sparsity of a projection matrix. This justifies the use of $\ell_1$ sparsity as an approximate relaxation or proxy for  $\norm{\Pi}_{\infty \to 2}$.
\begin{fact}\label{fact:sparsityconnection}
For any (orthogonal) projection matrix $\Pi \in \R^{n \times n}$ of rank $r$, we have 
$$\frac{\norm{\Pi}_1}{r} \le \norm{\Pi}_{\infty \to 2}^2 = \norm{\Pi}_{\infty \to 1} \le \norm{\Pi}_1 ,$$
where $\norm{\Pi}_1$ refers to the entry-wise $\ell_1$ norm of $\Pi$. 
\end{fact}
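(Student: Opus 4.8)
The statement bundles one identity with two inequalities, so I would handle them separately and spend essentially all the effort on the lower bound $\norm{\Pi}_1/r \le \norm{\Pi}_{\infty\to2}^2$, which is the only non-routine part. For the central identity $\norm{\Pi}_{\infty\to2}^2 = \norm{\Pi}_{\infty\to1}$, I would just use that $\Pi$ is an orthogonal projection, so $\Pi^\top = \Pi$ and $\Pi^2 = \Pi$, hence $\Pi^\top\Pi = \Pi$. Then for every $x$ we have $\norm{\Pi x}_2^2 = x^\top\Pi^\top\Pi x = x^\top\Pi x$, and maximizing both sides over $\norm{x}_\infty \le 1$ gives the identity, recalling from \eqref{eq:normduality} that $\norm{\Pi}_{\infty\to1} = \max_{\norm{x}_\infty\le1} x^\top\Pi x$. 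The upper bound $\norm{\Pi}_{\infty\to1} \le \norm{\Pi}_1$ is then immediate: for any $x$ with $\norm{x}_\infty \le 1$, we have $x^\top\Pi x = \sum_{i,j} x_i \Pi_{ij} x_j \le \sum_{i,j}|\Pi_{ij}| = \norm{\Pi}_1$ since $|x_i|,|x_j|\le 1$.

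The main work is the lower bound, and the plan is to diagonalize. Write $\Pi = \sum_{k=1}^r u_k u_k^\top$ for an orthonormal basis $u_1,\dots,u_r$ of its range $S$. Applying the triangle inequality to each entry $\Pi_{ij} = \sum_k u_k(i)u_k(j)$ and regrouping the double sum decouples it into a product per basis vector:
\[
\norm{\Pi}_1 = \sum_{i,j}\Bigabs{\sum_k u_k(i)u_k(j)} \le \sum_{k=1}^r\Paren{\sum_i|u_k(i)|}\Paren{\sum_j|u_k(j)|} = \sum_{k=1}^r\norm{u_k}_1^2.
\]
It then remains to bound each $\norm{u_k}_1 \le \norm{\Pi}_{\infty\to2}$, since each $u_k$ is a Euclidean unit vector lying in the range of $\Pi$. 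For this I would test the $\infty\to2$ norm against the sign vector of $u_k$: taking $x = \sgn(u_k)$, which satisfies $\norm{x}_\infty\le 1$, and using $\Pi u_k = u_k$ together with self-adjointness,
\[
\norm{u_k}_1 = \iprod{\sgn(u_k), u_k} = \iprod{\sgn(u_k), \Pi u_k} = \iprod{\Pi\sgn(u_k), u_k} \le \norm{\Pi\sgn(u_k)}_2\norm{u_k}_2 \le \norm{\Pi}_{\infty\to2},
\]
where the first inequality is Cauchy--Schwarz and the last uses $\norm{u_k}_2=1$ and $\norm{\sgn(u_k)}_\infty\le 1$. Summing the $r$ terms yields $\norm{\Pi}_1 \le \sum_k \norm{u_k}_1^2 \le r\,\norm{\Pi}_{\infty\to2}^2$, as desired.

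The crux — and the step most likely to go wrong if approached naively — is extracting the clean factor of exactly $r$ rather than a lossy $\sqrt{n}$-type factor. A direct column-by-column argument (bounding each column $\Pi e_i \in S$ via $\norm{\Pi e_i}_1 \le \norm{\Pi}_{\infty\to2}\sqrt{\Pi_{ii}}$ and then applying Cauchy--Schwarz across the $n$ columns) only delivers a bound of order $\sqrt{nr}\,\norm{\Pi}_{\infty\to2}$, which is too weak. The basis expansion above is what buys the tight factor: the regrouping turns the entrywise $\ell_1$ mass into a \emph{sum of $r$ squared $\ell_1$-norms of unit range vectors}, each of which is controlled by $\norm{\Pi}_{\infty\to2}^2$. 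I would also double-check tightness on the rank-one example $\Pi = \tfrac1n\mathbf{1}\mathbf{1}^\top$, where $\norm{\Pi}_1 = n = r\,\norm{\Pi}_{\infty\to2}^2$, to confirm the constant cannot be improved.
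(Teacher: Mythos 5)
Your proof is correct and follows essentially the same route as the paper's: both expand $\Pi = \sum_{k=1}^r u_k u_k^\top$ in an orthonormal basis of its range, use the triangle inequality to reduce $\norm{\Pi}_1$ to $\sum_k \norm{u_k}_1^2$, and control each term by $\norm{\Pi}_{\infty\to 2}^2$ (together with the standard max-versus-average step hiding the factor $r$). The only difference is that where the paper invokes the monotonicity of the $\infty\to 1$ norm from~\cite{ACCV} to get $\norm{u_k u_k^\top}_{\infty\to 1} \le \norm{\Pi}_{\infty\to 1}$, you prove the equivalent bound $\norm{u_k}_1 \le \norm{\Pi}_{\infty\to 2}$ directly via the sign-vector test $\iprod{\sgn(u_k), \Pi u_k}$, which makes the argument self-contained.
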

\begin{proof}
First we show the upper bound 
$$\norm{\Pi}_{\infty \to 1} = \max_{\substack{y,z \in \R^n: \norm{y}_\infty \le 1, \norm{z}_{\infty} \le 1}} \iprod{\Pi, yz^\top} \le \sum_{i,j} |\Pi(i,j)| = \norm{\Pi}_1.$$
We now show the lower bound. Let $\Pi=\sum_{i=1}^r v_i v_i^\top$, where $\set{v_i : i \in [r]}$ represents an orthonormal basis for the subspace given by $\Pi$. We have from the monotonicity of $\infty \to 1$ operator norm shown in \cite{ACCV} that
\begin{align*}
\norm{\Pi}_{\infty \to 1} &= \Bignorm{\sum_{i=1}^r v_i v_i^\top}_{\infty \to 1} \ge \max_{i \in [r]} \norm{v_i v_i^{\top}}_{\infty \to 1} = \max_{i \in [r]} \norm{v_i v_i^{\top}}_{1} \\
&\ge \frac{1}{r}\sum_{i \in [r]} \norm{v_i v_i^{\top}}_{1} \ge \frac{1}{r}\Bignorm{\sum_{i \in [r]} v_i v_i^{\top}}_{1}=\frac{\norm{\Pi}_1}{r},    
\end{align*}
where the last inequality uses the triangle inequality for matrix operator norms. 
 
\end{proof}

\section{Translating certified adversarial robustness from $\ell_2$ to $\ell_\infty$ perturbations} \label{sec:app-linf}
\subsection{Certifying the $\infty \to 2$ operator norm} \label{sec:mw-app}

\anote{Need to make it consistent with main body}

\anote{Add a note about run time comparison in appendix.}

We now describe our efficient algorithmic procedure that gives a certifies an upper bound on the $\infty \to 2$ norm of the given matrix. Moreover, as we will see in Theorem~\ref{thm:sdp-guarantee}, our algorithmic procedure comes with provable guarantees: it is guaranteed to output a value that is only a small constant factor off from the global optimum i.e., true value of the $\infty \to 2$ norm. 

For any matrix $A$, we first note that $\| A\|_{\infty \to 2}^2 = \|A^\top A\|_{\infty \to 1}$.  Moreover, by the variational characterization of operator norms, we have for $M=A A^\top \succeq 0$
\begin{equation} \label{eq:operator:problem}
\|A\|_{\infty \to 2}^2 = \max_{x: \|{x}\|_\infty \le 1}\max_{y: \|{y}\|_\infty \le 1} x^\top M y = \max_{x: \|{x}\|_\infty \le 1}x^\top M x.
\end{equation}

As stated in Section~\ref{sec:linf:methods}, our algorithm will apply to the more general problem \eqref{eq:QP:problem} where $M_{ii} \ge 0$ for all $i \in [n]$.
\begin{equation} \nonumber
\text{Given a symmetric matrix }M~\text{with } ~\forall i \in [n]:~ M_{ii} \ge 0 ,~\quad \max_{x: \|{x}\|_\infty \le 1}x^\top M x.
\end{equation}

Note that this is certainly satisfied by all $M \succeq 0$. 
We consider the following standard SDP relaxation  \eqref{eq:primal} for the problem, where $M \succeq 0$. The primal variables are represented by the positive semi-definite (PSD) matrix $X \in \R^{n \times n}$. The SDP dual of this relaxation, given in \eqref{eq:dual}, has variables $y_1, \dots, y_n \ge 0$ corresponding to the $n$ constraints \eqref{eq:primal:const1}. In what follows, for matrices $M,X$, $\iprod{M,X}:=\tr(M^\top X)$ represents the trace inner product. 

\fboxsep=0pt
\noindent\fbox{%
\vspace{5pt}
\noindent\begin{minipage}{.48\linewidth}
\begin{align}\label{eq:primal}
{\textbf{Primal SDP:}}~~~
\max_{X} & ~ \iprod{M,X} \\
\text{ s.t. } ~~~& X_{ii} \le 1, ~~~\forall i \in [n] \label{eq:primal:const1}\\
& X \succeq 0. \nonumber
\end{align}
\end{minipage}%
~~~\vline~~~
\begin{minipage}{.48\linewidth}
\begin{align}\label{eq:dual}
\textbf{Dual SDP:}~~~ 
\min_{y} & ~ \sum_{i \in [n]} y_i \\
\text{ s.t. } ~~~& diag(y) \succeq M  \label{eq:dual:const2}\\
& y \ge 0. \nonumber
\end{align}
\end{minipage}
~

\vspace{3pt}
}

Let us denote by $\sdpval$ the value of the optimal solution to the primal SDP \eqref{eq:primal}.
\anote{Move next line earlier?}
Recall that our algorithm goal is to output a value that is guaranteed to be a valid upper bound for $\norm{A}_{\infty \to 2}^2 = \norm{M}_{\infty \to 1}$.\footnote{Hence a fast algorithm that potentially finds a local optimum for the problem will not suffice for our purposes; we need an upper bound on the global optimum of \eqref{eq:operator:problem}.}  
The above primal SDP~\eqref{eq:primal} is a valid relaxation, and it is also tight up to a factor of $\pi/2$ i.e.,  $ (2/\pi) \sdpval \le \norm{M}_{\infty \to 1} \le \sdpval$. 
Moreover by weak duality, any feasible solution to the dual SDP~\eqref{eq:dual} gives a valid upper bound for the primal SDP value, and hence $\norm{M}_{\infty \to 1}$. However, the above SDP is computationally intensive to solve using off-the-shelf SDP solvers (even for CIFAR-10 images, $X$ is $1024 \times 1024$) \anote{Justify this better.}. Instead we design an algorithm based on the {\em multiplicative weight update} (MWU) framework for solving SDPs ~\cite{kleinLu,AroraHK}.

Our algorithm is described in Algorithm~\ref{algo:sdp} of Section~\ref{sec:linf:methods}. 
Our algorithm differs from the standard MWU approach (and analysis)~\cite{Kalethesis} treats the constraints \eqref{eq:primal:const1} and the objective~\eqref{eq:primal} similarly. Firstly, by treating the objective differently from the constraints, we ensure that the width of the convex program is smaller; in particular, the ``width'' parameter does not depend on the value of the objective function anymore. This allows us to get a better convergence guarantee. Secondly, we always maintain a dual feasible solution that gives a valid upper bound on the objective value (this also allows us to do an early stop if appropriate).
We remark that for {\em every} choice of $\alpha$ (and the update rule), $\cub=n\lambda$ gives a valid upper bound (due to dual feasibility). This is encapsulated in the following simple proposition 
where $\sdpval$ refers to the optimal solution value of the \eqref{eq:primal}.    

\begin{proposition} \label{prop:dual}
For any $\alpha \in \R_{\ge 0}^n $ with $\sum_{i=1}^n \alpha(i) =n$, if $\lambda$ is the maximum eigenvalue 
$$\lambda = \lambda_{\max}\Big( (\diag(\alpha)^{-1/2} M \diag(\alpha)^{-1/2}) \Big), \text{ then we have } \sdpval \le n \lambda,$$
and $y=\lambda \alpha$ is feasible for the dual SDP \eqref{eq:dual} and attains an objective value of $n \lambda$. 
\end{proposition}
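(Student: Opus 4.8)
The plan is to verify directly that $y=\lambda\alpha$ satisfies the two dual constraints in \eqref{eq:dual}, compute its objective value, and then invoke weak duality. The objective value is immediate: since $\sum_i \alpha(i)=n$, we get $\sum_i y_i = \lambda \sum_i \alpha(i) = n\lambda$. So the only real content is establishing dual feasibility, namely $y \ge 0$ together with $\diag(y) \succeq M$, after which weak duality for the primal--dual pair \eqref{eq:primal}--\eqref{eq:dual} forces $\sdpval \le \sum_i y_i = n\lambda$.

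The key step, and the one piece of genuine insight, is to read off the PSD constraint $\diag(y) \succeq M$ from the eigenvalue condition by a congruence transformation. Writing $D=\diag(\alpha)$, the defining property of $\lambda = \lambda_{\max}(D^{-1/2} M D^{-1/2})$ is exactly that $\lambda I \succeq D^{-1/2} M D^{-1/2}$. Conjugating this PSD inequality by $D^{1/2}$, which preserves positive semidefiniteness since congruence by an invertible matrix maps the PSD cone to itself, yields $D^{1/2}(\lambda I) D^{1/2} \succeq D^{1/2}(D^{-1/2} M D^{-1/2}) D^{1/2}$, i.e. $\lambda D \succeq M$. Since $\lambda D = \diag(\lambda\alpha) = \diag(y)$, this is precisely the dual constraint \eqref{eq:dual:const2}.

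It then remains to check $y \ge 0$, which reduces to $\lambda \ge 0$ because $\alpha \ge 0$. This is where the hypothesis $M_{ii}\ge 0$ enters: the maximum eigenvalue of the symmetric matrix $D^{-1/2} M D^{-1/2}$ is at least any of its diagonal entries $M_{ii}/\alpha(i) \ge 0$ (for instance by evaluating the Rayleigh quotient at standard basis vectors), so $\lambda \ge 0$ and hence $y=\lambda\alpha \ge 0$. Combining these two feasibility facts with the objective computation and weak duality completes the argument.

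I do not anticipate a hard obstacle here, since everything follows from the single congruence observation; the only subtlety worth flagging is that $D^{-1/2}$ requires $\alpha(i) > 0$ for all $i$, so the statement should be read with $\alpha$ strictly positive (the regime maintained by the algorithm via the $\tilde{\alpha}$ correction), with the boundary case recovered by continuity if one wishes to state it for all $\alpha \in \R_{\ge 0}^n$.
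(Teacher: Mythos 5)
Your proof is correct and follows essentially the same route as the paper's: compute the objective value, verify the PSD constraint $\diag(y)\succeq M$ by conjugating the eigenvalue inequality $\lambda I \succeq \diag(\alpha)^{-1/2}M\diag(\alpha)^{-1/2}$ with $\diag(\alpha)^{1/2}$, and invoke weak duality. If anything, you are slightly more careful than the paper's own proof of this proposition, which asserts $y\ge 0$ from $\alpha\ge 0$ alone without explicitly noting that $\lambda\ge 0$ must also be checked via $M_{ii}\ge 0$ (a point the paper only supplies later, inside the proof of Lemma~\ref{lem:dualproperties}).
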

\begin{proof}
Consider $y= \lambda \alpha$. Firstly, the dual objective value at $y$ is $\sum_{i=1}^n y_i = \lambda \sum_i \alpha(i) = n\lambda$, as required. Moreover $y \ge 0$, since the $\alpha \ge 0$. Finally, \eqref{eq:dual:const2} is satisfied since by definition of $\lambda$, 
\begin{align*}
    \diag(\alpha)^{-1/2} M \diag(\alpha)^{-1/2} \preceq \lambda \cdot I ~~\implies~~ M \preceq \lambda \cdot \diag(\alpha) = \diag(y),  
\end{align*}
by pre-multiplying and post-multiplying by $\diag(\alpha)^{1/2}$. Finally, from weak duality $\sdpval \le n \lambda$. 
\end{proof}

\paragraph{Analysis of the algorithm}

We show the following guarantee for our algorithm. 

\begin{theorem}\label{thm:sdp-guarantee}
For any $\delta>0$, any symmetric matrix $M$ with $M_{ii} \ge 0~\forall i \in [n]$ , Algorithm~\ref{algo:sdp} on input $M$, with parameters $\delta$ and $\rho=O(n/\delta)$ after $T=O(n \log n/\delta^3)$ iterations finds a solution $\widehat{X} \in \R^{n \times n}$ and $\widehat{y} \in \R^n$ such that 

\vspace{-\topsep}
\begin{enumerate}
\setlength{\parskip}{0pt}
  \setlength{\itemsep}{0pt plus 1pt}
    \item[(a)] $\widehat{y}$ is feasible for the dual SDP \eqref{eq:dual} such that $\iprod{M,\widehat{X}} = \sum_{i=1}^n y_i$.  
    \item[(b)] $\widehat{X} \succeq 0$, $\widehat{X}_{ii} \le 1+\delta$ for all $i \in [n]$, and $\iprod{M, \widehat{X}}\ge \sdpval$, the primal SDP value.
\end{enumerate}
\vspace{-\topsep}
\end{theorem}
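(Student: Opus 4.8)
The plan is to recast Algorithm~\ref{algo:sdp} as a multiplicative weights update (MWU) scheme on the $n$ diagonal constraints $X_{ii}\le 1$, and to analyze the running averages $\widehat X = \tfrac{1}{T+1}\sum_{t=0}^{T} v_t v_t^\top$ and $\widehat y = \tfrac{1}{T+1}\sum_{t=0}^{T}\lambda_t\tilde\alpha_t$. First I would record the per-iteration identities produced by the eigenvalue oracle. Writing $\tilde M_t = \diag(\tilde\alpha_t)^{-1/2}M\diag(\tilde\alpha_t)^{-1/2}$ with top unit eigenpair $(\lambda_t,u_t)$ and $v_t = \sqrt n\,\diag(\tilde\alpha_t)^{-1/2}u_t$, a direct computation gives $v_t^\top M v_t = n\lambda_t$ and $\sum_i \tilde\alpha_t(i) v_t(i)^2 = n\norm{u_t}_2^2 = n$. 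Since the correction preserves normalization ($\sum_i\tilde\alpha_t(i)=n$), the vector $p_t := \tilde\alpha_t/n$ is a distribution for which the ``gain'' $g_t(i) := v_t(i)^2 - 1$ satisfies $\langle p_t, g_t\rangle = 0$; this is the oracle guarantee the whole argument hinges on.

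Part (a) and the lower bound in (b) are then immediate. Each $\lambda_t\tilde\alpha_t$ is dual feasible with objective $n\lambda_t$ by Proposition~\ref{prop:dual}, and dual feasibility is a convex condition, so the average $\widehat y$ is dual feasible; moreover $\sum_i\widehat y_i = \tfrac{1}{T+1}\sum_t n\lambda_t = \tfrac{1}{T+1}\sum_t v_t^\top M v_t = \iprod{M,\widehat X}$, giving the matching objectives. Since $\widehat X$ is a convex combination of PSD rank-one matrices it is PSD, and weak duality applied to the dual feasible $\widehat y$ yields $\iprod{M,\widehat X} = \sum_i\widehat y_i \ge \sdpval$.

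The real work is the approximate primal feasibility $\widehat X_{ii}\le 1+\delta$, i.e. $\tfrac{1}{T+1}\sum_t g_t(i)\le \delta$ for every $i$. Here I would run the standard potential argument on $\Phi_t = \sum_i w_t(i)$, where $w_t$ are the unnormalized weights $w_{t+1}(i)=w_t(i)e^{\epsilon g_t(i)}$ with $\epsilon = \delta/\rho$, and $q_t := w_t/\Phi_t$ (so $\alpha_t = nq_t$, the renormalization being irrelevant for $q_t$). The crucial role of the correction $\tilde\alpha_t = (1-\delta)\alpha_t + \delta\mathbf{1} \ge \delta$ is to bound the width: $v_t(i)^2 = n\,u_t(i)^2/\tilde\alpha_t(i) \le n/\delta$, so $\epsilon\abs{g_t(i)}\le 1$ once $\rho = \Theta(n/\delta)$ and I may use $e^x\le 1+x+x^2$. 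Telescoping $\ln(\Phi_{t+1}/\Phi_t)\le \epsilon\langle q_t,g_t\rangle + \epsilon^2\langle q_t,g_t^2\rangle$ and comparing $\Phi_T \ge w_T(i)=e^{\epsilon\sum_t g_t(i)}$ against $\Phi_0 = n$ gives, for each $i$,
\[
\sum_t g_t(i)\ \le\ \frac{\ln n}{\epsilon} + \sum_t\langle q_t,g_t\rangle + \epsilon\sum_t\langle q_t,g_t^2\rangle .
\]

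Two error terms remain, and this is exactly where the objective-versus-constraint asymmetry of the algorithm matters. Because the update distribution $q_t$ differs from the oracle distribution $p_t = (1-\delta)q_t + \tfrac{\delta}{n}\mathbf{1}$, the first-order term is no longer zero but $\langle q_t,g_t\rangle = \tfrac{\delta}{n(1-\delta)}\bigl(n-\norm{v_t}_2^2\bigr)\le \tfrac{\delta}{1-\delta}=O(\delta)$; for the second-order term I would bound $g_t(i)^2 \le (n/\delta)(v_t(i)^2+1)$ and use $\langle q_t, v_t^2\rangle \le 1/(1-\delta)$ (which follows from $\langle p_t, v_t^2\rangle = 1$) to get $\epsilon\langle q_t, g_t^2\rangle = O(\epsilon n/\delta) = O(n/\rho) = O(\delta)$. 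Summing over the $T+1$ iterations gives $\tfrac{1}{T+1}\sum_t g_t(i)\le \tfrac{\ln n}{\epsilon T} + O(\delta)$, and choosing $T = \Theta(n\log n/\delta^3)$ (so that $\tfrac{\ln n}{\epsilon T} = \tfrac{\rho\ln n}{\delta T} = O(\delta)$) forces the average gain below $\delta$ after absorbing constants into $\delta$. The early-stop branches only terminate sooner with a solution already satisfying $X_{ii}\le 1+\delta$, so they are harmless. I expect the main obstacle to be precisely the bookkeeping of these two error terms: keeping the width out of the objective via the $\delta\mathbf{1}$ correction is what lets both the first- and second-order contributions be $O(\delta)$ simultaneously, and is the source of the improved $O(n\log n/\delta^3)$ iteration bound over the generic MWU analysis.
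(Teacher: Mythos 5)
Your proposal is correct and follows essentially the same route as the paper: the same running averages $\widehat X,\widehat y$, the same per-iteration identities $v_t^\top M v_t = n\lambda_t$ and $\sum_i\tilde\alpha_t(i)v_t(i)^2=n$ giving dual feasibility and matching objectives, and the same MWU analysis of the constraints $X_{ii}\le1$ in which the $\delta\mathbf{1}$ correction bounds the width by $n/\delta$ and the mismatch between $\tilde\alpha_t/n$ and $\alpha_t/n$ contributes only $O(\delta)$ error. The only cosmetic difference is that you unroll the multiplicative-weights regret bound via the potential $\Phi_t=\sum_i w_t(i)$, whereas the paper invokes it as a black box from Kale's thesis; the resulting first- and second-order error terms are identical.
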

The above Theorem~\ref{thm:sdp-guarantee} and Proposition~\ref{prop:dual} together imply Theorem~\ref{thm:sdp-combined}. 
We remark that the $\tilde{O}(n \log n) \cdot T_{eig}$ running time guarantee almost matches the guarantees for Klein and Lu for Max-Cut SDP~\cite{kleinLu} (up to a $O(1/\delta)$ factor), but our guarantees apply to the SDP for the more general Quadratic Programming problem. Note that the maximum eigenvalue can be computed within $\delta$ accuracy in $O(m/\delta)$ time where $m$ is the number of non-zeros of $M$, (see e.g., ~\cite{kleinLu} for a proof and use in MWU framework). To the best of our knowledge, the best known algorithm prior to our work for solving the SDP to this general Quadratic Programming problem (or even problem~\eqref{eq:operator:problem}) is by Arora, Hazan and Kale~\cite{AroraHK, Kalethesis} who give a $O(\frac{n^{1.5}}{\delta^{2.5}} \cdot \min\set{m,n^{1.5}/(\delta \alpha^*)})$, where the optimal solution value is $\sdpval=\alpha^* \norm{M}_1$. Compared to our algorithm's running time of $\tilde{O}(nm\log n)$, even when $\alpha^*=\Omega(1)$ the previous best requires a running time of $\tilde{O}(n^{1.5} \min\set{m,n^{1.5})})$; but it can even be the case that $\alpha^*=O(1/n)$~\cite{charikarwirth}.  Finally recall that upper bound given by the SDP is only off by a factor of $\pi/2$ for PSD matrices, that is the best possible assuming $P \ne NP$ 
(for general matrices the approximation factor could be $O(\log n)$) \cite{nesterov1998semidefinite, charikarwirth, alon2006quadratic, bhattiprolu2018inapproximability}.  

The analysis closely mirrors the analysis of the Multiplicative Weights algorithm for solving SDPs due to Klein and Lu~\cite{kleinLu}, and Arora, Hazan and Kale ~\cite{AroraHK,Kalethesis}. The main parameter that affects the running time of the multiplicative weights update method is the width parameter $\rho'$ of the SDP constraints. The analysis of Klein and Lu ~\cite{kleinLu} is specialized for the Max-Cut problem (which is a special case of \eqref{eq:operator:problem} where $M$ is a graph Laplacian), where they achieve a bound of $O(n\log n/\delta^2)$ eigenvalue computations. However their analysis does not directly extend to our more general setting (they crucially use the fact that the optimum solution value is at least $\Omega(\tr(M))$ to get a small bound on the width).  In the more general framework of \cite{AroraHK}, the optimization problem is first converted into a feasibility problem (hence the objective also becomes a constraint). For SDPs of the form \eqref{eq:primal}, the width parameter could be reasonably large and could result in $\Omega(n^{3/2})$ iterations (see Section 6.3 of \cite{Kalethesis}). In our analysis, we ensure that the width is small by treating the objective separately from the constraints. A crucial step of our algorithm is introducing an amount $\delta$ to each of the weights which solving the eigenvalue maximization in each step. This correction term of $\delta$ is important in getting an upper bound on the violation of each constraint, and hence the width of the program.  Finally, we present a slightly different analysis using the SDP dual, that has the additional benefit of providing a certificate of optimality. 

\paragraph{Proof of Theorem~\ref{thm:sdp-guarantee}.}

Let at step $t$ of the iteration, $\alpha^{(t)}$ be the weight vector, $X^{(t)}$ be the candidate SDP solution and  $y^{(t)}$ be the candidate dual solution maintained by the Algorithm~\ref{algo:sdp} in step \ref{eq:iterate:def}. Also $\tilde{\alpha}^{(t)}:= (1-\delta) \alpha^{(t)}+ \delta \bfone$, where $\bfone=(1,1,\dots,1)$. It is easy to see that
\begin{align*}
  \forall t \le T,~~  X^{(t)} &= \frac{1}{t}\sum_{\ell=1}^t v^{(\ell)} (v^{(\ell)})^\top, ~~~ \text{and  }~\quad~  y^{(t)} = \frac{1}{t}\sum_{\ell=1}^t \lambda^{(\ell)} \tilde{\alpha}^{(\ell)}.
\end{align*}

We first establish the following lemma, which immediately implies part (a) of the theorem statement. 
\begin{lemma}\label{lem:dualproperties}
For all $t \le T$, $y^{(t)}$ is feasible for the dual SDP~\eqref{eq:dual}, and $\iprod{M,X^{(t)}}=\sum_{i} y^{(t)}_i$. 
\end{lemma}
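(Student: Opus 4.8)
The plan is to reduce everything to a per-iteration statement and then exploit that both $X^{(t)}$ and $y^{(t)}$ are running averages. Fix an iteration $\ell \le t$ and write $\beta := \tilde{\alpha}^{(\ell)} = (1-\delta)\alpha^{(\ell)} + \delta \bfone$. The first observation I would make is that the correction term keeps the mass normalized: since $\sum_i \alpha^{(\ell)}(i) = n$, we get $\sum_i \beta(i) = (1-\delta)n + \delta n = n$, so $\beta$ satisfies the hypothesis of Proposition~\ref{prop:dual}. As $\lambda^{(\ell)}$ is by definition the maximum eigenvalue of $\diag(\beta)^{-1/2} M \diag(\beta)^{-1/2}$, Proposition~\ref{prop:dual} applied to $\beta$ immediately yields that the single-step vector $\lambda^{(\ell)}\tilde{\alpha}^{(\ell)}$ is dual feasible, i.e. $\lambda^{(\ell)}\tilde{\alpha}^{(\ell)} \ge 0$ and $\diag(\lambda^{(\ell)}\tilde{\alpha}^{(\ell)}) \succeq M$.

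For feasibility of $y^{(t)}$, I would then use that $y^{(t)} = \tfrac{1}{t}\sum_{\ell=1}^{t}\lambda^{(\ell)}\tilde{\alpha}^{(\ell)}$ is an average of these per-step feasible points, together with convexity of the dual feasible region. Concretely, $y^{(t)} \ge 0$ as an average of nonnegative vectors, and
\[
\diag(y^{(t)}) - M = \frac{1}{t}\sum_{\ell=1}^t\Big(\diag(\lambda^{(\ell)}\tilde{\alpha}^{(\ell)}) - M\Big) \succeq 0,
\]
because each summand is PSD by the previous paragraph and the PSD cone is closed under nonnegative combinations. This gives part~(a)'s feasibility claim.

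For the equality $\iprod{M, X^{(t)}} = \sum_i y^{(t)}_i$, I would compute each side as an average of per-step scalars and match them term by term. On the dual side, $\sum_i y^{(t)}_i = \tfrac{1}{t}\sum_{\ell=1}^t \lambda^{(\ell)} \sum_i \tilde{\alpha}^{(\ell)}(i) = \tfrac{n}{t}\sum_{\ell=1}^t \lambda^{(\ell)}$, again using $\sum_i \tilde{\alpha}^{(\ell)}(i) = n$. On the primal side, linearity of the trace inner product and symmetry of $M$ give $\iprod{M, X^{(t)}} = \tfrac{1}{t}\sum_{\ell=1}^t (v^{(\ell)})^\top M v^{(\ell)}$. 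The key computation is to substitute $v^{(\ell)} = \sqrt{n}\,\diag(\tilde{\alpha}^{(\ell)})^{-1/2} u^{(\ell)}$ and use that $u^{(\ell)}$ is a \emph{unit} eigenvector of $\diag(\tilde{\alpha}^{(\ell)})^{-1/2} M \diag(\tilde{\alpha}^{(\ell)})^{-1/2}$ with eigenvalue $\lambda^{(\ell)}$, which gives $(v^{(\ell)})^\top M v^{(\ell)} = n\,(u^{(\ell)})^\top(\lambda^{(\ell)} u^{(\ell)}) = n\lambda^{(\ell)}$. Summing then reproduces the dual side exactly.

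I do not expect a genuine obstacle: the argument is bookkeeping layered on top of Proposition~\ref{prop:dual}. The only two points that need care are (i) recognizing that the correction term is precisely what forces $\sum_i \tilde{\alpha}^{(\ell)}(i) = n$, so that Proposition~\ref{prop:dual} applies to the corrected weights $\tilde{\alpha}^{(\ell)}$ and not to $\alpha^{(\ell)}$ — this is exactly why the dual certificate is built from $\tilde{\alpha}^{(\ell)}$ — and (ii) that the Loewner constraint $\diag(y)\succeq M$ survives averaging because it defines a convex (indeed conic) region. Both are routine once isolated.
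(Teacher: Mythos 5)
Your proof is correct and follows essentially the same route as the paper's: per-iteration dual feasibility of $\lambda^{(\ell)}\tilde{\alpha}^{(\ell)}$ (which the paper reproves inline, including $\lambda^{(\ell)}\ge 0$ from $M_{ii}\ge 0$, rather than citing Proposition~\ref{prop:dual} as you do), followed by averaging over the PSD cone, and the identical eigenvector substitution $v^{(\ell)}=\sqrt{n}\,\diag(\tilde{\alpha}^{(\ell)})^{-1/2}u^{(\ell)}$ to match $\iprod{M,X^{(t)}}$ with $\sum_i y^{(t)}_i$. Your observation that the $\delta$-correction preserves $\sum_i\tilde{\alpha}^{(\ell)}(i)=n$ is exactly the normalization the paper uses implicitly.
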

\begin{proof}
We first check feasibility of the dual solution. First note that in every iteration $t \in [T]$, $\tilde{\alpha}^{(t)} \ge 0$ and $\lambda^{(t)} \ge 0$. The latter is because $\tr(\diag(\tilde{\alpha}^{(t)})^{-1/2} M \diag(\tilde{\alpha}^{(t)})^{-1/2}) \ge 0$ as all the diagonal entries of $M$ are non-negative, and hence $\lambda_{\max}(\diag(\tilde{\alpha}^{(t)})^{-1/2} M \diag(\tilde{\alpha}^{(t)})^{-1/2}) \ge 0 $. 
Hence $y^{(t)} \ge 0$. Moreover, by definition of $\lambda^{(t)}$
\begin{align*}
     M &\preceq \lambda^{(\ell)} \cdot \diag(\tilde{\alpha}^{(\ell)}) ~~\forall \ell \le T, ~~(\text{ since } \diag(\tilde{\alpha}^{(t)})^{-1/2} M \diag(\tilde{\alpha}^{(t)})^{-1/2} \preceq \lambda^{(t)} I) ,\\
    M&\preceq \frac{1}{t} \sum_{\ell=1}^t \lambda^{(\ell)} \cdot \diag(\tilde{\alpha}^{(\ell)}) = \frac{1}{t} \sum_{\ell=1}^t \diag(y^{(\ell)}).     
\end{align*}

Finally, we establish $\iprod{M,X^{(t)}}=\sum_{i} y^{(t)}_i$. Note that by definition $v^{(\ell)}= \sqrt{n} \diag(\tilde{\alpha}^{(\ell)})^{-1/2} u^{(\ell)}$, where $u^{(\ell)}$ is the top eigenvector of $(\diag(\tilde{\alpha})^{-1/2} M \diag(\tilde{\alpha})^{-1/2})$, and $\lambda^{(\ell)}$ is its eigenvalue. Hence
\begin{align*}
    \iprod{M,X^{(t)}} &= \frac{1}{t} \sum_{\ell=1}^t \iprod{ M, v^{(\ell)} (v^{(\ell)})^{\top} }= \frac{1}{t} \sum_{\ell=1}^t (v^{(\ell)})^{\top} M v^{(\ell)} \\
    &= n \frac{1}{t} \sum_{\ell=1}^t \Big((u^{(\ell)})^{\top} \diag(\tilde{\alpha}^{(\ell)})^{-1/2} M \diag(\tilde{\alpha}^{(\ell)})^{-1/2} u^{(\ell)}\Big) =  \frac{1}{t} \sum_{\ell=1}^t \lambda^{(\ell)} \cdot n\\
    \text{On the other hand, }~ \sum_{i=1}^n y^{(t)}_i &= 
    \sum_{i=1}^n \frac{1}{t} \sum_{\ell=1}^t \lambda^{(\ell)} \tilde{\alpha}^{(\ell)}(i) = \frac{1}{t} \sum_{\ell=1}^t \lambda^{(\ell)} \cdot \sum_{i=1}^n \tilde{\alpha}^{(\ell)}(i) =  \frac{1}{t} \sum_{\ell=1}^t \lambda^{(\ell)} \cdot n.
\end{align*}
This proves the lemma. 
\end{proof}

We now complete the proof of Theorem~\ref{thm:sdp-guarantee}. 
\begin{proof}
Recall that $\widehat{X}=X^{(T)}, \widehat{y}=y^{(T)}$.  Consider the SDP solution $X' = \tfrac{1}{1+\delta} \widehat{X}$. From Lemma~\ref{lem:dualproperties}, we see that $y^{(T)}$ is feasible and 
\begin{align*}
    \iprod{M,X'}&= \frac{1}{1+\delta} \iprod{M, \widehat{X}} =  \frac{1}{1+\delta} \sum_{i=1}^n \widehat{y}(i) \ge \frac{1}{1+\delta}\cdot \sdpval,
\end{align*}
where the last inequality follows from weak duality of the SDP. Hence it suffices to show that $X'$ is feasible i.e., $\widehat{X}_{ii} \le 1+\delta$ for all $i \in [n]$.  

We prove $\max_{i \in [n]} \widehat{X}_{ii} \le 1+\delta$ by following the same multiplicative weight analysis in \cite{Kalethesis} (see Section 2, Theorem 5), but only restricted to the $n$ constraints of the form $X_{ii} \le 1$.

There is one expert for each $i \in [n]$ corresponding to the constraint $X_{ii} \le 1$. At each iteration $t$, we consider a probability distribution $p^{(t)}=\tfrac{1}{n} \alpha^{(t)}$. The loss of expert $i$ at time $t$ is given by $m^{(t)}(i)=\tfrac{1}{\rho}(1-(v^{(t)}_i)^2)$ for $\rho=n/\delta$ . We note that in each iteration $\ell$, 
\begin{align*}
\sum_i (\alpha^{(t)}_i + \delta) (v^{(t)}_i)^2 \le n &~\implies~~ \forall i \in [n],~~ (v^{(t)}_i)^2 \le \frac{n}{\delta} \le \rho\\ 
\text{Hence }~ \frac{1}{\rho} \ge m_i^{(t)} = \frac{1}{\rho}\Big(1-(v^{(t)}_i)^2\Big) &\ge -\frac{(n/\delta -1)}{\rho} \ge -1 .    
\end{align*}
 
From the guarantees of the multiplicative weight update method (see Section 2, Theorem 2 of \cite{Kalethesis}), we have for each $i \in [n]$
\begin{align}
    \sum_{t=1}^T \sum_{i=1}^n m_i^{(t)} p_i^{(t)} &\le \sum_{t=1}^T m_i^{(t)} + \delta \sum_{t=1}^T |m_i^{(t)}| + \frac{\ln n}{\delta} \nonumber\\
    &=(1-\delta)\sum_{t=1}^T m_i^{(t)} + 2\delta \sum_{\substack{t\in [T]\\ m_i^{(t)}>0}} |m_i^{(t)}| + \frac{\ln n}{\delta} \nonumber\\
        &\le \frac{(1+\delta)}{\rho} \sum_{t=1}^T \big(1-(v^{(t)}_i)^2 \big) + \frac{2\delta}{\rho} + \frac{\ln n}{\delta} \nonumber\\
    \frac{\rho}{T}\sum_{t=1}^T \sum_{i=1}^n m_i^{(t)} p_i^{(t)} &\le (1-\delta) \cdot (1-\widehat{X}_{ii}) + 2 \delta + \frac{\rho \ln n}{T \delta}. \label{eq:mw:1}
\end{align}
On the other hand, using the fact that $\sum_i \alpha^{(t)}_i=n$, and $\sum_i \tilde{\alpha}^{(t)}_i (v^{(t)}_i)^2 = n$ we have  
\begin{align}
    \frac{\rho}{T}\sum_{t=1}^T \sum_{i=1}^n m_i^{(t)} p_i^{(t)} &= \frac{1}{T}\sum_{t=1}^T \frac{1}{n }\sum_{i=1}^n \big(1- (v^{(t)}_i)^2  \big)\alpha_i^{(t)} \nonumber\\
    &= \frac{1}{T}\sum_{t=1}^T \sum_{i=1}^n \alpha_i^{(t)} - \frac{1}{T}\sum_{t=1}^T \frac{1}{n}\sum_{i=1}^n \alpha_i^{(t)} (v^{(t)}_i)^2  = -\frac{1}{T}\sum_{t=1}^T \frac{1}{n}\sum_{i=1}^n \Big(\frac{\tilde{\alpha_i}^{(t)}}{1-\delta} + \frac{\delta}{1-\delta} \Big) (v^{(t)}_i)^2  + 1 \nonumber\\ 
    &= -\frac{1}{T}\sum_{t=1}^T \frac{1}{(1-\delta)n}\sum_{i=1}^n \tilde{\alpha_i}^{(t)} (v^{(t)}_i)^2 + \frac{1}{T}\sum_{t=1}^T \frac{1}{(1-\delta)n}\sum_{i=1}^n \delta (v^{(t)}_i)^2)   + 1 \nonumber\\
    &= -\frac{1}{1-\delta} + 1 +  \frac{1}{T}\sum_{t=1}^T \frac{1}{(1-\delta)n} \sum_{i=1}^n \delta (v^{(t)}_i)^2 \nonumber\\
    &\ge -\frac{\delta}{1-\delta}  \label{eq:mw:2}
\end{align}
Combining \eqref{eq:mw:1} and \eqref{eq:mw:2}, we get that for $\delta \in (0,1/2)$, and $T=\rho \log n/ ((1-\delta)\delta^2) = O(n \log n/\delta^3)$ 
\begin{align*}
\forall i \in [n],~~ (1-\widehat{X}_{ii}) &\ge -\frac{\delta}{(1-\delta)^2}- \frac{2 \delta}{1-\delta} - \frac{\rho \ln n}{T \delta(1-\delta)} \\    
\widehat{X}_{ii} &\le 1+ \delta (1+4\delta) +  2\delta(1+2\delta)+\delta \le 1+ 8\delta. \end{align*} 
This completes the proof of part (b). 
It is also straightforward to see that in above analysis, a $(1+\delta)$ approximate eigenvalue method can also be used to get a similar guarantee with an extra $(1+\delta)$ factor loss in the objective value. 
\end{proof}

We remark that the larger dependence of $\rho=O(n/\delta)$ is needed to ensure that in  each iteration $(v_i^{(t)})^2 / \rho \le 1$. If this condition is satisfied for $\rho \ll n/\delta$, then the iteration bound is $O(\rho \log n)/\delta^2$. This also justifies the use of a more aggressive i.e., smaller choice of $\rho$ in practice.  


\ifNeurIPS
\subsection{Finding Robust Low-rank Representations} \label{sec:robustprojheuristic}

We now show how to find a good low-rank robust projections when it exists in the given representation. Given a dataset $A$, our goal is to find a (low-rank) projection $\Pi$ that gets small reconstruction error $\norm{A - \Pi A}_F^2$, while ensuring that $\norm{\Pi}_{\infty \to 2}$ is small. Awasthi et al.~\cite{ACCV} formulate this as an optimization problem that is NP-hard, but show polynomial time algorithms based on the Ellipsoid algorithm that gives constant factor approximations. However, the algorithm is impractical in practice because of the Ellipsoid algorithm, and the separation oracle used by it (that in turn involves solving an SDP). 
We instead use the connection to sparsity described in Section~\ref{sec:linf:methods} to devise a much faster heuristic based on sparse PCA to find a good projection $\Pi$ (see \eqref{fact:sparsityconnection} in the appendix for a formal justification). We just use an off-the-shelf heuristic for sparse PCA (the scikit-learn sparse PCA implementation~\cite{scikit-learn} based on ~\cite{mairal2009sparsePCA}), along with our certification procedure Algorithm~\ref{algo:sdp}). 

\begin{algorithm}[H]
\caption{Find a Robust Projection}
\label{algo:spca}
\begin{algorithmic}[1]
\Function{RobustProjection}{data $A \in R^{m \times n}$, rank $k$, reconstruction error $\delta$} 
\State Set $M:=(A^\top A)/\tr(A^\top A)$. Initialize $\widehat{\Pi} \gets \emptyset, \widehat{\kappa}=\infty$. 
\For{different values of $r \le k$}
\State Find $r$-PCA of the $M$ to get a rank $r$ projection $\Pi_1$. $M' \gets M - \Pi_1 M \Pi_1$  
\State Run sparse PCA on $M'$ to find a rank $(k-r)$ projection $\Pi_2$. Set $\Pi=\Pi_1 + \Pi_2$. 
\State Run $\textsc{CertifySDP}(\Pi,\delta=1/4,\rho)$ to get an upper bound $\kappa$.
\If{$\kappa < \widehat{\kappa}$ and if $\iprod{M, I-\Pi} \le \delta$}
    \State $\widehat{\Pi} \gets \Pi$, $\widehat{\kappa}=\kappa$.
\EndIf
\EndFor
\State Output $\widehat{\Pi}$, $\widehat{\kappa}$. 
\EndFunction
\end{algorithmic}
\end{algorithm}
\fi

\section{Imperceptibility in the DCT basis and Training Certified $\ell_\infty$ Robust Networks}
\label{sec:app-imperceptibility}

\paragraph{Adversarial Examples for CIFAR-10 images. }
 We take a ResNet-32 network that has been trained on the CIFAR-10 datasets via the PGD based method of \cite{madry2017towards} for robustness to $\ell_\infty$ perturbations. We then generate imperceptible adversarial examples for the test images via projected gradient ascent and an $\ell_\infty$ perturbation radius of $\epsilon = 0.09$. See Figure~\ref{fig:adv_examples}  for a few of the original images and the corresponding adversarial perturbations.
\begin{figure}
    \centering
    \includegraphics[width=0.8\textwidth]{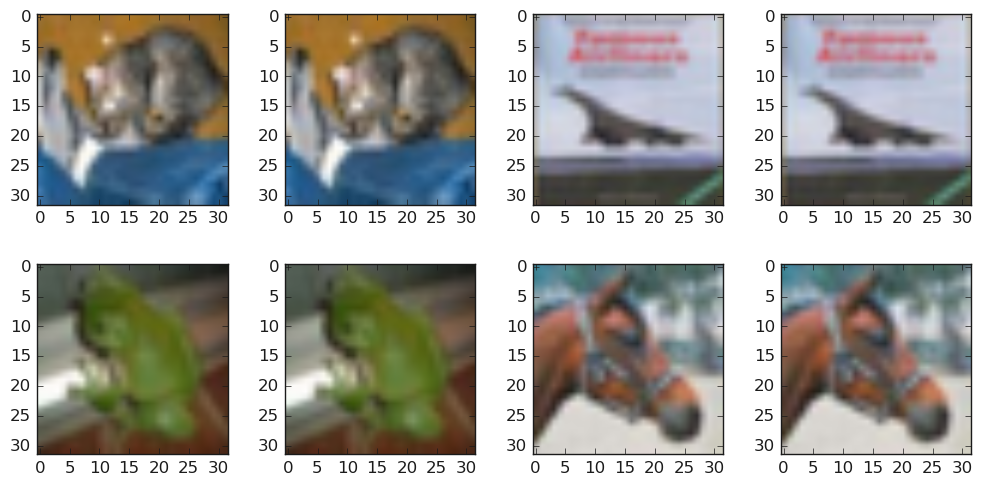}
    \caption{The images on the left correspond to the original images and the images on the right correspond to imperceptible adversarial examples within an $\ell_\infty$ radius of $\eps \le 0.09$. \label{fig:adv_examples}}
\end{figure}

\section{Experimental Evaluation of  MWU-based SDP Certification Algorithm~\ref{algo:sdp}}
\label{app:sdp-runtme}
\begin{figure}[h]
\centering
\includegraphics[width=1\textwidth]{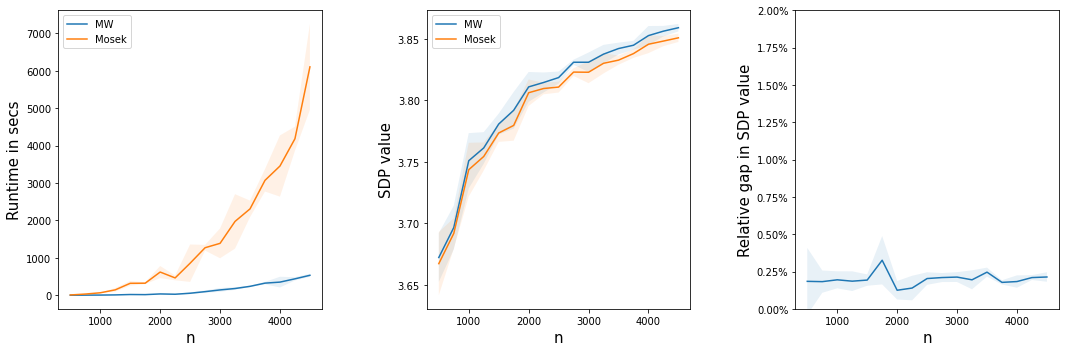}
\caption{Comparison of the running time of Algorithm~\ref{algo:sdp}(left plot) with the MOSEK solver for PSD matrices of varying sizes~($n$ from $500$ to $4500$). The middle plot shows the SDP values output by the two procedures. The right plot shows the relative error in the SDP value output by our procedure as compared to the value output by the MOSEK solver. 
\label{fig:mw_vs_mosek_psd} }
\end{figure}

In this section we empirically evaluate the effectiveness of our multiplicative weights based procedure from Algorithm~\ref{algo:sdp} for solving the general Quadratic Programming~(QP) problem as defined in \eqref{eq:QP:problem}. Recall that this corresponds to the following optimization problem:
\begin{equation} \label{eq:QP:problem-app}
\text{Given a symmetric matrix }M~\text{with } ~\forall i \in [n]:~ M_{ii} \ge 0 ,~\quad \max_{x: \|{x}\|_\infty \le 1}x^\top M x.
\end{equation}
Recall that in our Algorithm~\ref{algo:sdp} every iterations involves a single maximum eigenvalue computation, for which we use an off the shelf subroutine from Python's scipy package. In our implementation of Algorithm~\ref{algo:sdp} we add an early stopping condition if the dual value does not improve noticeably in successive rounds. Recall that Proposition~\ref{prop:dual} proves that this also gives a valid upper bound on the primal SDP value.

We consider two scenarios, one where $M$ is a positive semi-definite matrix~(PSD) and the other when $M$ is a symmetric matrix with non-negative diagonal entries. In each case we compare the running time of our algorithm and the dual SDP value that it outputs with the corresponding values obtained by using a state of the art SDP solver on the same instance. As a comparison for our experiments we choose the SDP solver from the commercial optimization software MOSEK~\cite{mosek}.  
For consistency, the comparison experiments were run on a single core of a Microsoft Surface Pro 3 tablet/computer with Intel Core i7-8650U CPU \@ 1.90 GHz with 16GB RAM. 

For the case when $M$ is a PSD matrix, we generate a random $n \times n$ matrix $A$ that contains entries drawn from a standard normal distribution and set $M = A A^\top$, and renormalize the matrix so that the trace is $1$. We vary $n$ from $500$ to $4500$ in increments of $250$ (we used $5$ random trials for each value of $n$). Figure~\ref{fig:mw_vs_mosek_psd} shows the comparison of our algorithm with the SDP solver from MOSEK for different values of $n$, averaged over the trials along with error bars. Notice that the running time of our algorithm is an order of magnitude faster than the MOSEK solver particularly for larger values of $n$, and furthermore the SDP values output are within $0.5\%$ of the values output by exactly optimizing the SDP via the MOSEK solver. We also include a table of the average run times for a few different values of $n$ in Table~\ref{tbl:sdp-comparison-runtime}. We remark that other non-commercial SDP solvers like CVXPY~\cite{cvxpy} timed out even for for $n>800$. 

A similar trend holds in Figure~\ref{fig:mw_vs_mosek_qp} where we consider randomly generated instances of the more general QP problem \eqref{eq:QP:problem-app}. Here $M$ is chosen to be a random symmetric $n \times n$ matrix with entries drawn from the standard normal distribution and replace the diagonal entries of $M$ with their absolute values (so that the diagonals are non-negative). 
\begin{table}
\centering
\begin{tabular}{ |c|c|c| }
\hline
n & Running time (in s) of our Algorithm~\ref{algo:sdp} & Running time (in s) of MOSEK \\
\hline
$500$ & $1.384 \pm 0.324$ & $8.444 \pm 0.389$\\
\hline
$1000$ & $5.631 \pm 1.190$ & $66.578 \pm 5.432$\\
\hline
$2000$ & $35.747 \pm 8.960$ & $620.615 \pm 161.96$\\
\hline
$3000$ & $143.07 \pm 17.29$& $1387.69 \pm 398.26$\\
\hline
$4000$ & $351.81 \pm 142.97 $& $3453.95 \pm 818.50$\\
\hline
$4500$ & $518.56$ $\pm$ $27.81$ & $5579.93 \pm 523.53$\\
\hline
\end{tabular}
\vspace{4pt}
 \caption{The average time~(in seconds) taken by Algorithm~\ref{algo:sdp} and the MOSEK solver on random PSD matrices for a few different sizes ($n$). The mean and standard deviation are computed over $5$ independent runs for each value of $n$. See also Figure~\ref{fig:mw_vs_mosek_psd} for the plot based on more values of $n$ taken in increments of $250$. 
 \label{tbl:sdp-comparison-runtime}}
\end{table}

Furthermore, unlike MOSEK, our procedure can scale to much larger values of $n$. As an example, Table~\ref{tbl:sdp-runtime} shows the running time needed for our procedure to perform $200$ iterations on PSD matrices $M$ of sizes ranging from $5000$ to $2000$. This demonstrates the scalability of our method for larger datasets with higher dimensions. For scalability purposes, the experiments below were run on a machine with access to a single GPU. Hence, the runtimes reported in Table~\ref{tbl:sdp-runtime} are not directly comparable to those in Table~\ref{tbl:sdp-comparison-runtime}.
\begin{figure}[h]
\centering
\includegraphics[width=1\textwidth]{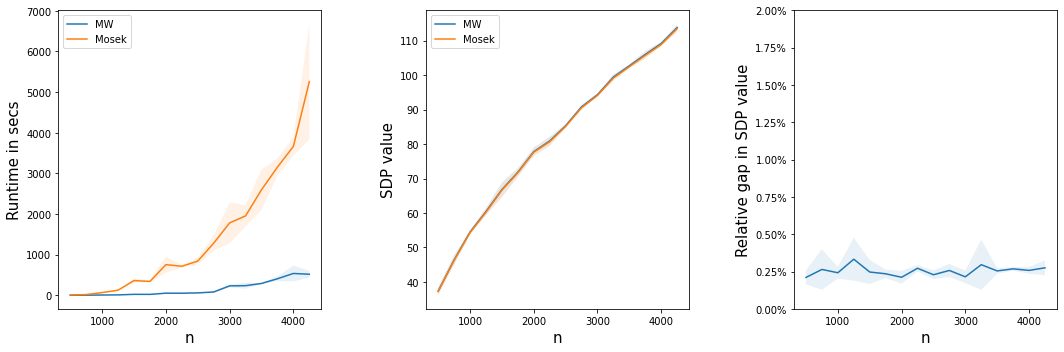}
\caption{Comparison of the running time of Algorithm~\ref{algo:sdp}(left plot) with the MOSEK solver for general symmetric matrices with non-negative diagonal entries of varying sizes~($n$). The middle plot shows the SDP values output by the two procedures. The right plot shows the relative error in the SDP value output by our procedure as compared to the value output by the MOSEK solver. \label{fig:mw_vs_mosek_qp} }
\end{figure}

\begin{table}[H]
\centering
\begin{tabular}{ |c|c| }
\hline
n & Running time (in s) of our Algorithm~\ref{algo:sdp}\\
\hline
$5000$ & $104.92 \pm 4.04$\\
\hline
$6000$ & $156.29 \pm 4.99$\\
\hline
$7000$ & $213.80 \pm 7.86$\\
\hline
$8000$ & $272.83 \pm 17.29$\\
\hline
$9000$ & $346 \pm 8.99$\\
\hline
$10000$ & $378.70$ $\pm$ $17.4$\\
\hline
$15000$ & $1092.14$ $\pm$ $103.79$\\
\hline
$20000$ & $1801.08$ $\pm$ $63.95$\\
\hline

\end{tabular}
\vspace{4pt}

 \caption{The time~(in seconds) taken by Algorithm~\ref{algo:sdp} to perform $200$ iterations on random PSD matrices of varying sizes ($n$). The mean and standard deviation are computed over $5$ independent runs for each value of $n$.
 \label{tbl:sdp-runtime}}
 \vspace{-10pt}
\end{table}

We remark that our algorithm is specific to the Quadratic Programming SDP (which is a large class of problems in itself), while MOSEK SDP solver is a general purpose commercial SDP solver based on interior-point methods that is more accurate.      
 Our MWU-based algorithm (Algorithm~\ref{algo:sdp}) for the Quadratic Programming SDP~\eqref{eq:primal} gives much faster algorithms for approximately solving the SDP (along with dual certificate of the upper bounds), while not compromising much on approximation loss in the value. Hence the experiments are consistent with the running time improvements suggested by the theoretical results in Section~\ref{sec:mw}.


\section{Robust Projections for Audio Data in DCT basis}
\label{app:audio}

We now perform an empirical evaluation on audio data to demonstrate the existence of good low-dimensional robust representations in the DCT basis. We consider the Mozilla CommonVoice speech-to-text audio dataset (english en\_1488h\_2019-12-10)\footnote{\href{https://voice.mozilla.org/en/datasets}{{https://voice.mozilla.org/en/datasets}}} that was also considered by the work of Carlini et al.~\cite{carlini2018audio} on audio adversarial examples. We use a standard approach (that is also employed by \cite{carlini2018audio}) to convert each audio file that may be of variable duration into a representation in high dimensional space. Each
element $x_i$ is a signed 16-bit value. First each audio file is converted into a sequence of overlapping frames corresponding to time windows; and each frame is represented as an $n$-dimensional vector of $16$ bit values where $n$ is given by the product of the window size and the sampling rate (for the Commonvoice dataset $n=1200$). Hence each audio file corresponds to a sequence of frames, each of which is represented by an $n$ dimensional vector.




\begin{table}[h]
    \centering
\begin{tabular}{ |c|c|c|c| }
\hline
Trial number & Number of frames N & $\infty \to 2$ norm & Projection error $\%$\\
\hline
1 & $421469$ & $21.285$ & $4.974$\\
\hline
2 & $ 480773$ & $22.899$ & $5.056$\\
\hline
3 & $447353$ & $21.962$ & $5.578$\\
\hline
4 & $461034$ & $24.439$ & $5.247$\\
\hline
5 & $438954$ & $23.840$ & $5.392$\\
\hline
6 & $452581$ & $22.902$ & $5.025$\\
\hline
\end{tabular}
\vspace{4pt}
 \caption{\label{tbl:audio}The table shows SDP upper bounds on $\infty \to 2$ norm for projection matrices of rank $r=200$ obtained by Algorithm~\ref{algo:spca} for $6$ trials each with random $1000$ audio samples from the Mozilla Common Voice dataset. Each random sample of $1000$ audio samples corresponds to roughly $N \approx 10000$ frames each of $n=1200$ dimension in the DCT basis.}
 
\end{table}

For our experiments, in each random trial we consider $1000$ randomly chosen audio files, and consider the data matrix (with $n$-dimensional columns) consisting of all the frames corresponding to these audio files (and we consider $6$ such random trials).  We first use the sparse PCA based heuristic (Algorithm~\ref{algo:spca} in Appendix~\ref{sec:mw-app}) to find a projection matrix of rank $r=200$. We then use Algorithm~\ref{algo:sdp} to compute upper bounds on the ${\infty \to 2}$ operator norm of the projections matrices. Table~\ref{tbl:audio} shows the values of the operator norms certified by our algorithm for the projection matrices that are found, along with the reconstruction/ projection error, expressed as a percentage. Notice the obtained subspaces have operator norm values significantly smaller than $\sqrt{n} \approx 34.641$.

\end{document}